\newcommand{\customfootnotetext}[2]{{
  \renewcommand{\thefootnote}{#1}
  \footnotetext[0]{#2}}}
    \titlespacing{\section}{0pt}{1.5ex}{.75ex}
    \titlespacing{\subsection}{0pt}{1ex}{0ex}
    \titlespacing{\subsubsection}{0pt}{0.5ex}{0ex}
\newlength{\commentWidth}
\tikzset{circle split part fill/.style  args={#1,#2}{%
 alias=tmp@name, 
  postaction={%
    insert path={
     \pgfextra{%
     \pgfpointdiff{\pgfpointanchor{\pgf@node@name}{center}}%
                  {\pgfpointanchor{\pgf@node@name}{east}}%
     \pgfmathsetmacro\insiderad{\pgf@x}
      \fill[#1] (\pgf@node@name.base) ([xshift=-\pgflinewidth]\pgf@node@name.east) arc
                          (0:180:\insiderad-\pgflinewidth)--cycle;
      \fill[#2] (\pgf@node@name.base) ([xshift=\pgflinewidth]\pgf@node@name.west)  arc
                           (180:360:\insiderad-\pgflinewidth)--cycle;            
         }}}}}  
\newcommand{\matching}[0]{matching}
\newcommand{\Matching}[0]{Matching}
\newcommand{\indep}{\perp \!\!\! \perp}
\newcommand{\MCTSS}{\texttt{MCTS}}
\newcommand{\GREEDYS}{\texttt{Greedy}}
\newcommand{\OPT}{\texttt{OPT}}
\newtheorem{theorem}{Theorem}[section]
\newtheorem{lemma}[theorem]{Lemma}
\newtheorem{assumption}[theorem]{Assumption}
\newtheorem{proposition}[theorem]{Proposition}
\newtheorem{definition}[theorem]{Definition}
\DeclareMathOperator*{\argmax}{arg\,max}
\DeclareMathOperator*{\expect}{\mathbb E}
\DeclareMathOperator*{\prob}{\mathbb P}
\title{Improving Policy-Constrained Kidney Exchange \\ via Pre-Screening}
\author{%
  Duncan C McElfresh\\
  Mathematics Department\\
    Computer Science Department\\
  University of Maryland\\
  College Park, MD 20742\\
  \texttt{dmcelfre@umd.edu} \\
   \And
    Michael Curry\\
    Computer Science Department\\
   University of Maryland\\
   College Park, MD 20742\\
    \texttt{curry@cs.umd.edu} \\
   \AND
    Tuomas Sandholm \\
    Computer Science Department\\
    Carnegie Mellon University\\
    Strategy Robot, Inc. \\
   Optimized Markets, Inc. \\
  Strategic Machine, Inc. \\
   \And
   John P Dickerson \\
    Computer Science Department\\
  University of Maryland\\
  College Park, MD 20742\\
  \texttt{john@cs.umd.edu} \\
}
\begin{document}

\maketitle

\begin{abstract}
In barter exchanges, participants swap goods with one another without exchanging money; these exchanges are often facilitated by a central clearinghouse, with the goal of maximizing the aggregate quality (or number) of swaps. 
Barter exchanges are subject to many forms of uncertainty--in participant preferences, the feasibility and quality of various swaps, and so on. 
Our work is motivated by kidney exchange, a real-world barter market in which patients in need of a kidney transplant swap their willing living donors, in order to find a better match.
Modern exchanges include 2- and 3-way swaps, making the kidney exchange \emph{clearing problem} NP-hard.
Planned transplants often \emph{fail} for a variety of reasons--if the donor organ is rejected by the recipient's medical team, or if the donor and recipient are found to be medically incompatible. 
Due to 2- and 3-way swaps, failed transplants can ``cascade'' through an exchange; one US-based exchange estimated that about $85\%$ of planned transplants failed in 2019.
Many optimization-based approaches have been designed to avoid these failures; however most exchanges cannot implement these methods, due to legal and policy constraints.
Instead, we consider a setting where exchanges can \emph{query} the preferences of certain donors and recipients--asking whether they would accept a particular transplant.
We characterize this as a two-stage decision problem, in which the exchange program (a) queries a small number of transplants before committing to a matching, and (b) constructs a matching according to fixed policy. 
We show that selecting these edges is a challenging combinatorial problem, which is non-monotonic and non-submodular, in addition to being NP-hard. 
We propose both a greedy heuristic and a Monte Carlo tree search, which outperforms previous approaches, using experiments on both synthetic data and real kidney exchange data from the United Network for Organ Sharing.
\end{abstract}

\section{Introduction}\label{sec:intro}

We consider a multi-stage decision problem in which a decision-maker uses a fixed \emph{policy} to solve a hard (stochastic) problem. 
Before using the policy, 
%
%
the decision-maker can first \emph{measure} some of the uncertain problem parameters--in a sense, guiding the policy toward a better solution. 
%
%
Our primary motivation is kidney exchange, a process where patients in need of a kidney transplant swap their (willing) living donors, in order to find a better match. 
Many government-run kidney exchanges match patients and donors using a \emph{\matching{} algorithm} that follows strict policy guidelines~\cite{biro2019modelling}; this \matching{} algorithm is often written into law or policy, and is not easily modified. 
%
%
Modern kidney exchanges use both cyclical swaps and chain-like structures (initiated by an unpaired altruistic donor)~\cite{Rees09:Nonsimultaneous}, and identifying the max-size or max-weight set of transplants is both NP- and APX-hard~\cite{Abraham07:Clearing,Biro09:Maximum}.

In kidney exchange--as in many resource allocation settings--information used by the decision-maker is subject to various forms of uncertainty. 
Here we are primarily concerned with uncertainty in the \emph{feasibility} of potential transplants: if a donor is matched with a potential recipient, will the transplant actually occur?
Planned transplants may \emph{fail} for a variety of reasons: for example, medical testing may reveal that the donor and recipient are incompatible (a \emph{positive crossmatch}); the recipient or their medical team may reject a donor organ in order to wait for a better match; or the donor may decide to donate elsewhere before the exchange is planned. 
Failed transplants are especially troublesome in kidney exchange, due to the cycle and chain structures used: for example, suppose that a cyclical swap is planned between three patient/donor pairs; if any one of the planned transplants fails, then none of the other transplants in that cycle can occur.
Unfortunately, it is quite common for planned transplants to fail.  
For example, the United Network for Organ Sharing (UNOS\footnote{UNOS is the organization tasked with overseeing organ transplantation in the US: \url{https://unos.org/}.}) estimates that in FY2019, about $85\%$ of their planned kidney transplants failed~\cite{Leishman19:Challenges}.

Various \matching{} algorithms have been proposed that aim to mitigate transplant failures (for example, using stochastic optimization~\cite{dickerson2019failure,Anderson15:Finding}, robust optimization~\cite{mcelfresh2019scalable}, or conditional value at risk~\cite{bidkhori2020kidney}). 
However, implementing these strategies would require modifying fielded \matching{} algorithms--which in many cases would require changing law or policy.
One way to avoid failures without modifying the \matching{} algorithm is to \emph{pre-screen} potential transplants~\cite{Leishman19:Challenges,Blum13:Harnessing,blum2020ignorance}, by communicating with the recipients' medical team and possibly using additional medical tests.
Pre-screening transplants is costly, as it requires scarce time and resources.
Furthermore, there are often many thousand potential transplants in any given exchange; selecting which transplants to screen is not easy. 

In this paper we investigate methods for selecting a limited number of transplants to pre-screen, in order to ``guide'' the \matching{} algorithm to a better outcome.
We formalize this as a multistage stochastic optimization problem, and we consider both an \emph{offline} setting (where screenings are selected all at once), and an \emph{online} setting (where screenings are selected sequentially).
%
%
%
%

\noindent \textbf{Related Work.}
%
%
While kidney exchange is known to be a hard packing problem, several algorithms exist that are scalable in practice, and are used by fielded exchanges~\cite{Dickerson16:Position,Anderson15:Finding,Manlove15:Paired}.
Prior work has addressed potential transplant failures; our model is inspired by~\citet{dickerson2019failure}.
Pre-screening potential transplants has also been addressed in prior work~(\cite{blum2020ignorance,Molinaro13:Kidney}, and \S~5.1 of~\cite{dickerson2016unified}), and our model is similar to stochastic matching and stochastic $k$-set packing~\cite{bansal2012lp}.
However there are substantial differences between these models and ours:
(a) many prior approaches assume that a large number of transplants may be pre-screened~\cite{blum2020ignorance,Molinaro13:Kidney}--on the order of one for each patient in the exchange; we assume far fewer screenings are possible;
%
%
(b) prior work often assumes a \emph{query-commit} setting--where successfully pre-screened transplants \emph{must} be matched.
Instead we assume that non-screened transplants may also be matched--which more-accurately represents the way that modern exchanges operate;
(c) most prior work assumes that transplants that pass pre-screening are guaranteed to result in a transplant.
In reality, transplants often fail after pre-screening, a fact reflected in our model.


One of our approaches is based on \emph{Monte Carlo Tree Search} (MCTS), which allows efficient exploration of intractably large decision trees.
While MCTS is primarily associated with Markov decision processes and game-playing~\cite{bouzy2004associating}, it has been used successfully for combinatorial optimization~\cite{kartal2016monte}.
We use a version of MCTS, Upper Confidence Bounds for Trees (UCT), which balances exploration and exploitation by treating each tree node as a multi-armed bandit problem~\cite{auer2002finite,kocsis2006bandit}.

\subsection*{Our Contributions}

\begin{enumerate}
    \item (\S~\ref{sec:edge-selection-problem}) We formalize the \emph{policy-constrained edge query problem}: where a decision-maker (such as a kidney exchange program) selects a set of potential edges (potential transplants) to pre-screen, prior to constructing a final packing (a set of transplants) using a fixed algorithm.
    This model generalizes existing models in the literature, as edge failure probabilities depend on whether or not the edge is pre-screened.
    Further, we allows for context-specific constraints, such as those imposed by public policy or the particular hospital or exchange.
    \item (\S~\ref{sec:solving-the-query-problem}) We prove that when the decision-maker uses a max-weight packing policy (the most common choice among fielded exchanges), the edge query problem is both non-monotonic and non-submodular in the set of queried edges.
    Despite these worst-case findings we show that this problem is nearly monotonic for real and synthetic data, and simple algorithms perform quite well.
    On the other hand, when the decision-maker uses a \emph{failure-aware} (stochastic) packing policy, the edge query problem becomes monotonic under mild assumptions.
    \item (\S~\ref{sec:expt}) We conduct numerical experiments on both simulated and real exchange data from the United Network for Organ Sharing (UNOS). We demonstrate that our methods substantially outperform prior approaches and a randomized baseline.
    %
\end{enumerate}


%
%

\section{The Policy-Constrained Edge Query Problem}\label{sec:edge-selection-problem}

Kidney exchanges are represented by a graph $G=(E, V)$ where vertices $V$ represent (incompatible) patient-donor pairs, and non-directed donors (NDDs) who are willing to donate without receiving a kidney in return. 
Directed edges $e\in E$ between vertices represent potential transplants from the donor of one vertex to the patient of another.
Edge weights represent the ``utility'' of an edge, and are typically set by exchange policy.
Solutions to a kidney exchange problem (henceforth, \emph{\matching{}s}) consist of both directed \emph{cycles} on $G$ containing only patient-donor pairs, and directed \emph{chains} beginning with an NDD and passing through one or more pairs; see Appendix~\ref{app:kpd} for an example exchange graph.
Each vertex may participate in only one edge in a \matching{}--as each vertex can donate and receive at most one kidney.

Vectors are denoted in bold, and are indexed by either cycles or edges: $\bm y_e$ indicates the element of $\bm y$ corresponding to edge $e$, and $\bm x_c$ is the element of $\bm x$ corresponding to cycle $c$.
Our notation uses a \emph{cycle-chain} representation for \matching{}s\footnote{Our experiments use the position-indexed formulation, which is more compact and equivalent~\cite{Dickerson16:Position}.}:
let $\mathcal C$ represent cycles and chains in $G$, where each cycle and chain corresponds to a list of edges; as is standard in modern exchanges, we assume that cycles and chains are limited in length.
\Matching{}s are expressed as a binary vector $\bm x \in \{0, 1\}^{|\mathcal C|}$, where $\bm x_c=1$ if cycle/chain $c$ is in the \matching{}, and $0$ otherwise.
Let $\bm w_c$ be the weight of cycle/chain $c$ (the sum of $c$'s edge weights).
%
%
Let $\mathcal M$ denote the set of \emph{feasible \matching{}s}--that is, the set of vertex-disjoint cycles and chains on $G$.
The total weight of a \matching{} is simply the summed weights of all its constituent cycles and chains: $\sum_{c\in \mathcal C} \bm x_c \bm w_c$.
We denote \emph{sets} of edges using binary vectors, where $\bm q\in \{0, 1\}^{|E|}$ represents the set of all edges with $\bm q_e=1$.

In the remainder of this paper we refer to pre-screening a transplant as \emph{querying an edge}, in order to be consistent with the literature.

%
%
%
%
%
%
%
\begin{figure}
    \centering
    \includegraphics[width=0.98\textwidth]{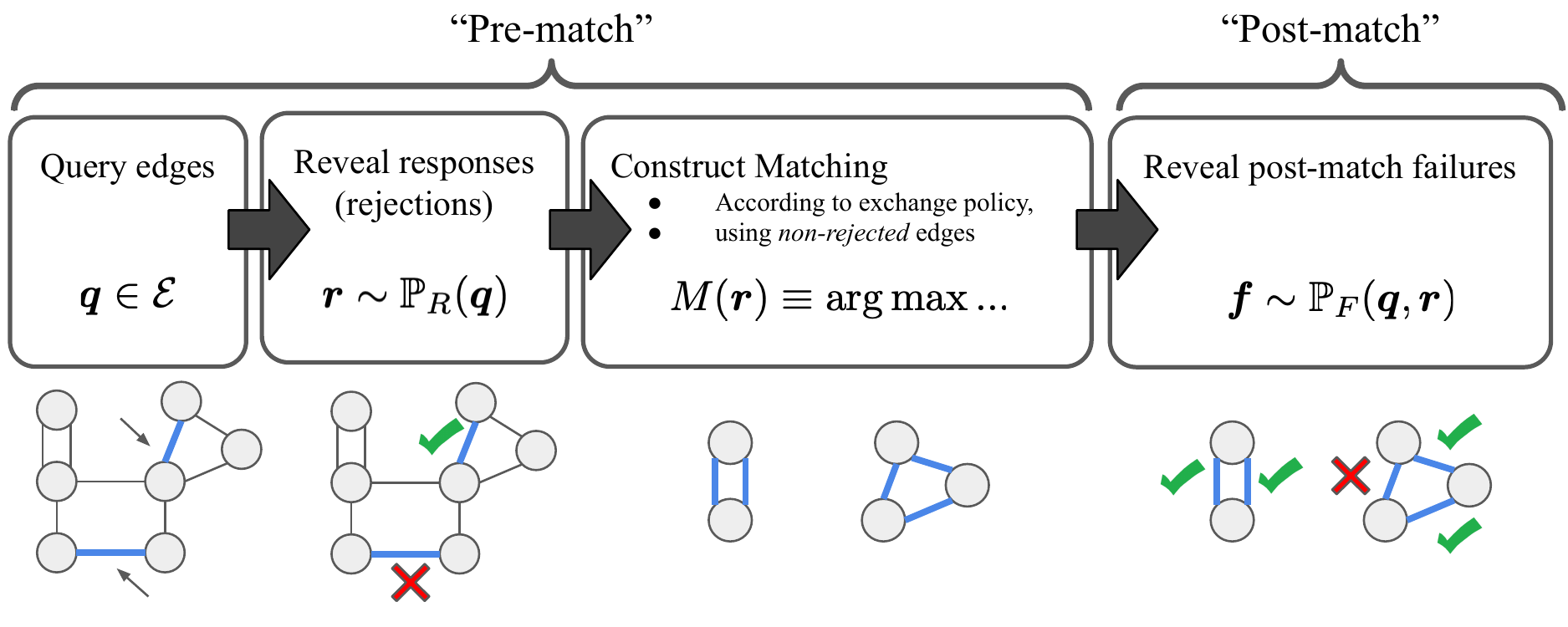}
    \caption{
    Single-stage edge selection:
    First, edges are selected to be queried, and responses revealed.
    Then, a final \matching{} is constructed according to the exchange's \matching{} policy.
    Finally, the post-match edge failures are revealed.
    }
    \label{fig:stages}
    \end{figure}


\noindent \textbf{Selecting Edge Queries.}
Our setting consists of two phases (see Figure~\ref{fig:stages}): during \emph{pre-match}, the decision-maker selects edges to query, and each queried edge is either accepted or rejected; then the decision-maker constructs a \matching{} using a fixed policy.
During \emph{post-match}, each match edge either fails (no transplant) or succeeds (the transplant proceeds).
We consider two version of the pre-match phase: in the \emph{single-stage} version, the decision-maker selects all queries before observing edge responses (accept/reject); in the \emph{multi-stage} version, one edge is selected at a time and responses are observed immediately.

Unlike most prior work, edges in our model may fail during both the pre- and post-match phase.  
For example, suppose the decision-maker queries an edge from a $60$-year-old non-directed donor, to a $35$-year-old recipient; if the recipient or their medical team rejects the elderly donor and decides to wait for a younger donor, this is a pre-match rejection.
Instead suppose the edge is not queried, and it is included in the final \matching{}; if medical screening reveals that the patient and donor are incompatible, this is a post-match failure.
We refer to pre-match failures as \emph{rejections} and post-match failures as \emph{failures}; however we make no assumption about their cause.
%
%
We represent potential failures and rejections using binary random variables: 
%
$\bm r \in \{0, 1\}^{|E|}$ denotes pre-match rejections, where $\bm r_e = 1$ if $e$ is queried and rejected, and $0$ otherwise ($\bm r_e=0$ for all non-queried edges).
Similarly $\bm f \in \{0, 1\}^{|E|}$ denotes post-match failures, where $\bm f_e = 1$ if edge $e$ fails post-match, and $0$ otherwise.
We assume that the distribution of rejections $\bm r\sim \prob_R(\bm q)$ is known, and depends on $\bm q$; we assume the distribution of failures $\bm f \sim \prob_F(\bm q, \bm r)$ is known, and depends on both $\bm q$ and $\bm r$.

Rejections and failures impact the matching through the \emph{weight} of each cycle and chain.
If any cycle edge fails, then \emph{no} transplants in the cycle can proceed; if a chain edge fails, than all edges \emph{following} it cannot proceed.\footnote{This assumes that chains can be \emph{partially} executed: for example, suppose that the $4^{th}$ edge in a $10$-edge chain fails; the first three edges can still be matched, and the post-failure chain weight sums only these three edges. Not all fielded exchanges use this policy: some exchanges cancel the entire chain if one of its edges fails.} 
Suppose we observe failures $\bm f$; the \emph{final \matching{} weight} of $c$ is 
$$
F(c, \bm y) \equiv \begin{cases}
\sum_{e\in c} w_e &\text{if}\, \sum_{e\in c} \bm y_e = 0 \\
0 &\text{if $c$ is a cycle and}\; \sum_{e\in c} \bm y_e > 0 \\
\sum_{e\in c'} w_e &\text{if $c$ is a chain, where $c'$ includes all edges up to the first failed edge.}
\end{cases}
$$
Thus the \emph{post-match expected weight} of \matching{} $\bm x$, due to both rejections $\bm r$ and failures $\bm f$, is
\begin{equation*}
W(\bm x; \bm q, \bm r) \equiv \expect_{\bm f \sim \prob_F(\bm q, \bm r)} \left[ \sum_{c\in \mathcal C} \bm x_c \, F(c, \bm r + \bm f) \right].
\end{equation*}
\noindent \textbf{\Matching{} Policy} 
In this paper we assume that the final \matching{} is constructed using a fixed \matching{} policy, which uses only \emph{non-rejected} edges; we denote this policy by $M(\bm r)$.
%
%
%
%
%
We focus primarily on the \emph{max-weight} policy $M^{\texttt{MAX}}(\cdot)$, which is used by most fielded exchanges, and the \emph{failure-aware} policy $M^{\texttt{FA}}(\cdot)$, which maximizes the expected post-match weight~\cite{dickerson2019failure}:
\begin{equation*}
M^{\texttt{MAX}}(\bm r) \in \argmax_{\bm x\in \mathcal M} \; \sum_{c\in \mathcal C} \bm x_c\, F(c, \bm r) \;, 
\hspace{0.2in} 
M^{\texttt{FA}}(\bm r) \in \argmax_{\bm x\in \mathcal M(\bm r)}\; \expect_{\bm f \sim \prob_F(\bm q, \bm r)}\;\left[ \sum_{c\in \mathcal C} \bm x_c \, F(c, \bm r + \bm f) \right]\;.
\end{equation*}
Evaluating this policy requires solving a kidney exchange clearing problem, which is NP-hard~\cite{Abraham07:Clearing}. 
However, state-of-the-art method can solve realistic kidney exchange clearing problems in fractions of a second (e.g., our experiments use the PICEF method of~\citet{Dickerson16:Position}); thus, throughout this paper we treat this policy as a low- or no-cost oracle. 

%
%
%
%
Next we formalize the \emph{edge selection problem}--the main focus of this paper.
%
%
We denote by $\mathcal E$ the set of ``legal'' edge subsets, subject to exchange-specific constraints; we assume that $\mathcal E$ is a matroid with ground set $E$.
For example, the decision-maker may limit the number of queries issued to any one medical team (vertex in $G$) or transplant center (group of vertices). 
We aim to select an edge set $\bm q\in  \mathcal E$ which maximizes the \emph{expected weight} of the final \matching{}.
These edges are selected using only the distribution of future rejections and failures; we take a \emph{stochastic optimization} approach, maximizing the expected outcome over this uncertainty.

\noindent \textbf{Single-Stage Setting.} 
The single-stage policy-constrained edge selection problem (henceforth, the \emph{edge selection problem}) is expressed as
\begin{equation}
    \max_{\bm q \in \mathcal E} \;V^S(\bm q)\, ,%
    \hspace{0.3in} 
    \text{with} 
    \quad V^S(\bm q) \equiv \expect_{\bm r \sim \prob_R(\bm q)} \big[  \; W\left(M(\bm r); \bm q, \bm r\right) \; \big] \;,\label{eq:singlestage} 
\end{equation}
where, $M(\bm r)$ denotes the \matching{} policy after observing rejections $\bm r$, and $W(\bm x; \bm q, \bm r)$ denotes the post-match expected weight of \matching{} $\bm x$.
Exact evaluation of $V^S(\bm q)$ is often intractable, as the support of $\prob_R(\bm q)$ grows exponentially in $|\bm q|$.
In experiments we approximate $V^S(\bm q)$ using sampling, and these approximations converge for a moderate number of samples (see Appendix~\ref{app:obj}).

\noindent \textbf{Multistage Setting.}
In the multi-stage setting, edge rejections are observed immediately after each edge is queried.
The multi-stage problem is expressed as
\begin{equation}
    \max_{\bm q^1 \in \mathcal E_1} \; \expect_{\bm r^1 \sim \prob_R(\bm q^1)} \left[ \; 
    \max_{\bm q^2 \in \mathcal E_1} \; \expect_{\bm r^2 \sim \prob_R(\bm q^2)} \left[ \; \dots \; \max_{\bm q^K \in \mathcal E_1}\; \expect_{\bm r^K \sim \prob_R(\bm q^K)} \; \left[ \; W\left(M(\bm r); \bm q, \bm r\right) \; \right]\; \right] \; \dots \;\right],\label{eq:multistage}
\end{equation}
where $\bm q \equiv \sum_{i=1}^K \bm q^i$ denotes all queried edges, $\bm r \equiv \sum_{i=1}^K \bm r^i$ denotes all rejections, and $\mathcal E_1\subseteq \mathcal E$ be denotes the legal edge subsets containing only one edge.
First, we observe that Problems~\ref{eq:singlestage} and~\ref{eq:multistage} require evaluating a \matching{} policy $M(\bm r)$.
In the case of kidney exchange, evaluating both the max-weight policy $M^{\texttt{MAX}}(\cdot)$ and the failure-aware policy $M^{\texttt{FA}}(\cdot)$ require solving NP-hard problems; thus Problems~\ref{eq:singlestage} and~\ref{eq:multistage} are at least NP-hard as well.
%

However, regardless of \matching{} policy, the question whether \emph{edge selection} is is hard.
We observe that while these problems are difficult in principle, experiments (\S~\ref{sec:expt}) show that they are easy in practice.
Proofs of the following propositions can be found in Appendix~\ref{app:proofs}.
%
%
\begin{proposition}\label{prop:non-monotonic}
With \matching{} policy $M^\texttt{FA}(\cdot)$, the objective of Problem~\ref{eq:singlestage} is non-monotonic in the number of queried edges, even with independent edge distributions.
\end{proposition}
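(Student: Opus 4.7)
The plan is to exhibit a small explicit counterexample in which adding an edge to the query set strictly decreases $V^S$. I would take $G$ to consist of a single $2$-cycle between two patient--donor pairs, with reciprocal edges $e_1,e_2$ of weight $w>0$, and let $\mathcal E = 2^{\{e_1,e_2\}}$ so the query set is unconstrained. I would make the rejection/failure distributions product measures across edges (so the independence hypothesis holds) and, crucially, let the post-match failure probability of each edge equal some fixed $p\in[0,1)$ regardless of whether the edge was queried, while a queried edge is pre-match rejected with probability $\rho\in(0,1)$. The model permits this, since $\prob_F(\bm q,\bm r)$ is only required to be a known function of $(\bm q,\bm r)$ and may be constant in $\bm q$.

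On this instance the only nontrivial matching is the cycle, whose expected post-match weight is $2w(1-p)^2>0$, so $M^{\texttt{FA}}$ selects the cycle whenever it is feasible and the empty matching otherwise. With $\bm q=\emptyset$, no edges can be pre-match rejected, and $V^S(\emptyset)=2w(1-p)^2$. With $\bm q=\{e_1\}$, I would condition on whether $e_1$ is pre-match rejected: with probability $\rho$ the cycle becomes infeasible (value $0$), and with probability $1-\rho$ it is feasible and $M^{\texttt{FA}}$ picks it (conditional value $2w(1-p)^2$). Combining gives $V^S(\{e_1\}) = 2w(1-\rho)(1-p)^2 < V^S(\emptyset)$, and since $\emptyset \subsetneq \{e_1\}$ both lie in $\mathcal E$, this violates monotonicity in the number of queried edges.

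The only subtle ingredient is the choice of $\prob_F$: by making querying informationally worthless (the failure probability of $e_1$ is $p$ whether queried or not), the only consequence of a query is the chance of a pre-match rejection, which is pure cost. The hard part, such as it is, is just verifying that this choice is consistent with the model and that $M^{\texttt{FA}}$ really does select the cycle in both branches (it does, since the expected weight of the cycle is positive and it is the unique nontrivial matching). Everything else is arithmetic. The same construction in fact suggests what ``mild assumptions'' restore monotonicity for $M^{\texttt{FA}}$, namely that querying an edge must improve its failure distribution by at least enough to outweigh the pre-match rejection probability.
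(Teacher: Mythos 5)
Your construction is internally consistent and the arithmetic is fine, but it does not establish what this proposition is actually about, and it conflicts with the paper's own framework. First, note that the $M^\texttt{FA}$ in the statement is evidently a typo: the paper's proof (Appendix) explicitly constructs a counterexample for the \emph{max-weight} policy $M^\texttt{MAX}(\cdot)$, and the surrounding discussion plus Proposition~\ref{prop:fail-aware-monotonic} (monotonicity under $M^\texttt{FA}$ with independent edges) make clear that the intended contrast is ``$M^\texttt{MAX}$ is non-monotone, $M^\texttt{FA}$ is not.'' Your proof targets the literal $M^\texttt{FA}$ wording, and it succeeds only by choosing $\prob_F$ so that querying an edge leaves its failure probability unchanged while adding a fresh chance of pre-match rejection, i.e.\ querying is pure cost. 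That is precisely the class of distributions the paper rules out via Assumption~\ref{assumption:additional-queries} (the appendix even gives an example of this type as the disallowed case), and under that assumption your two-edge instance cannot exist --- indeed Proposition~\ref{prop:fail-aware-monotonic} says the $M^\texttt{FA}$ objective is then monotone. Put differently, your argument would ``disprove'' Proposition~\ref{prop:fail-aware-monotonic} as stated in the main text just as easily, which is a signal that you are exploiting a modeling degeneracy rather than the intended phenomenon. Note also that with any policy and any graph your mechanism works identically, so it says nothing about the interaction between queries and the fixed matching policy.

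The paper's proof is genuinely different and carries the real content: it uses the \emph{Simple} distribution ($P_R=0.5$, $P_Q=1$, $P_N=0.5$), under which a queried edge has the same marginal success probability ($0.5$) as an unqueried one --- querying is never intrinsically harmful --- and a six-vertex graph (Figure~\ref{fig:proof-graph}) containing a $2$-cycle, a $3$-cycle with one edge of weight $1.5$, and another $3$-cycle. Querying $e_3$ sometimes reveals a rejection, after which the failure-oblivious max-weight policy switches to the nominally heavier $3$-cycle $(B,C,E)$ whose \emph{expected} weight ($3.5/8$) is worse than the $2$-cycle it abandons ($1/2$), giving $V^S(\{e_3\})=27/32<7/8=V^S(\bm 0)$. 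The non-monotonicity is thus an information-misuse effect of $M^\texttt{MAX}$, not a property of the query itself; on your single $2$-cycle with the paper's distribution the objective is exactly unchanged ($w/2$ with or without the query), so the trivial instance cannot work once the distribution is benign. To repair your attempt you would need an example in which each edge's query is (weakly) beneficial in the sense of Assumption~\ref{assumption:additional-queries} and the loss comes from the policy's response --- which is exactly the paper's construction.
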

In other words, querying additional edges can sometimes lead to a \emph{worse} outcome.
This is somewhat counter-intuitive; one might think that providing additional information to the \matching{} policy would strictly improve the outcome.
%
%
This is a worst-case result--and in fact our experiments demonstrate that querying edges almost always leads to a better final \matching{} weight.
%
%
\begin{proposition}\label{prop:non-submodular}
With \matching{} policy $M^\texttt{MAX}(\cdot)$, the objective of Problem~\ref{eq:singlestage} is non-submodular in the set of queried edges.
\end{proposition}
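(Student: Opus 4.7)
The plan is to exhibit an explicit counterexample: a small exchange graph, together with edge-rejection and edge-failure distributions, for which there exist sets $A\subseteq B\subseteq E$ and an edge $e\notin B$ satisfying the strict inequality
\[
V^S(A\cup\{e\}) - V^S(A) \;<\; V^S(B\cup\{e\}) - V^S(B),
\]
i.e., the marginal gain of a query is \emph{larger} when more edges have already been queried. This violation of diminishing returns is exactly what non-submodularity means.

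The intuition that guides the construction is the following: under $M^{\texttt{MAX}}$, querying an edge only changes the outcome when the edge is \emph{rejected} \emph{and} its rejection alters the max-weight packing. In a graph with several mutually-exclusive cycles of different weights and failure profiles, a single rejection may only bump the policy from one doomed cycle to another doomed cycle of slightly smaller weight, producing no real improvement; but rejecting enough doomed cycles \emph{jointly} can redirect the policy all the way to a lower-weight but reliable alternative. This is precisely a positive-synergy (superadditive) effect, the opposite of submodularity.

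Concretely, I would use a graph with four pair vertices $v_0,v_1,v_2,v_3$ and three 2-cycles $c_1,c_2,c_3$ between $v_0$ and each of $v_1,v_2,v_3$, with weights $10,9,8$. Because they share $v_0$, at most one is in any feasible matching. Choose the distributions so that one designated edge of $c_1$ (call it $e_1$) and one designated edge of $c_2$ (call it $e_3$) each reject with probability $1$ when queried (and with probability $0$ when not queried), while the edges of $c_3$ never reject. Post-match, the edges of $c_1$ and $c_2$ fail with probability $1$ while the edges of $c_3$ never fail. Then setting $A=\emptyset$, $B=\{e_1\}$, $e=e_3$, direct case analysis gives $V^S(\emptyset)=V^S(\{e_1\})=V^S(\{e_3\})=0$ (the policy always lands on a certainly-failing cycle), while $V^S(\{e_1,e_3\})=8$ (both $c_1$ and $c_2$ are removed and $c_3$ is chosen), producing marginal gains $0$ and $8$ respectively, and hence the desired violation.

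The only step that requires care is verifying that this construction is consistent with the model of Section~\ref{sec:edge-selection-problem}: the cycles must be vertex-disjoint except for the shared $v_0$, only non-rejected edges are fed to $M^{\texttt{MAX}}$, and $\prob_R$ and $\prob_F$ must be well-defined joint distributions (here product Bernoullis suffice). The probability-$1$ rejections and failures are chosen for clarity and can be replaced by values close to $1$ without disturbing the inequality by continuity of $V^S$ in the underlying Bernoulli parameters, so the example is robust. I do not anticipate a genuine obstacle, as the counterexample avoids any reliance on non-trivial combinatorics; the main subtlety is just making sure the degenerate probability values are legal within the model before appealing to continuity for a strict, non-degenerate version.
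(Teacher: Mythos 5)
Your proposal is correct: the construction with three weight-$10,9,8$ two-cycles sharing $v_0$, deterministic rejections for one designated edge each of $c_1$ and $c_2$, and deterministic post-match failure of $c_1,c_2$ but not $c_3$, gives $V^S(\emptyset)=V^S(\{e_1\})=V^S(\{e_3\})=0$ and $V^S(\{e_1,e_3\})=8$ under $M^{\texttt{MAX}}$, so the marginal gain of querying $e_3$ jumps from $0$ to $8$ as the base set grows from $\emptyset$ to $\{e_1\}$; this violates the diminishing-returns characterization and hence submodularity. The paper proves the same proposition also by counterexample, but with a different instance: it reuses the graph of Figure~\ref{fig:proof-graph} (one $2$-cycle and two $3$-cycles, one edge of weight $1.5$) under the nondegenerate \emph{Simple}-style distribution $P_R=0.5$, $P_Q=1$, $P_N=0.5$, and verifies the union/intersection form $V^S(X\cup\{e_1,e_2\})+V^S(X)>V^S(X\cup\{e_1\})+V^S(X\cup\{e_2\})$ with $X=\{e_3\}$ via explicit expected-weight arithmetic over rejection scenarios ($27/32$, $43/64$, $31/32$, $35/32$). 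Your version is more elementary and essentially immune to arithmetic slips, since all quantities are $0$ or $8$; the paper's buys the (minor) point that non-submodularity already arises under the same non-degenerate i.i.d.\ distribution used elsewhere in the paper and on a graph with both cycles and richer structure. Two small notes: probability-$1$ rejections/failures are already legal in the model (the distributions $\prob_R(\bm q)$ and $\prob_F(\bm q,\bm r)$ are arbitrary), so the perturbation argument is optional; and your appeal to continuity of $V^S$ in the Bernoulli parameters is valid here specifically because $M^{\texttt{MAX}}$ ignores failure probabilities, so the matching selected in each rejection scenario is unchanged by the perturbation — for a failure-aware policy the argmax could switch and that one-line continuity claim would need more care.
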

%
%
%
In other words, certain edges are \emph{complementary} to each other--and querying complementary edges simultaneously can yield a greater improvement than querying them separately.
%
Taken together, these propositions indicate that single-stage edge selection with \matching{} policy $M^\texttt{MAX}(\cdot)$ is a challenging combinatorial optimization problem.
On the other hand, using the failure-aware \matching{} policy $M^\texttt{FA}(\cdot)$ allows us to avoid some of these issues.
\begin{proposition}\label{prop:fail-aware-monotonic}
With \matching{} policy $M^\texttt{FA}(\cdot)$, and if all edges are independent, the objective of Problem~\ref{eq:singlestage} is monotonic in the set of queried edges.
\end{proposition}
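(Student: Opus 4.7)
The plan is to prove monotonicity by induction on the size of the query set, so it suffices to show that adding a single edge $e^*$ to an existing query set $\bm q$ weakly increases $V^S$. The argument combines a coupling of the two randomized problems on a common probability space with the value-of-information principle for Bayes-optimal decisions.

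First, I would make the independence assumption concrete by introducing, for each edge $e \in E$, mutually independent latent Bernoulli variables $R_e^{\star}$ (``would $e$ be rejected pre-match if queried?'') and $F_e^{\star}$ (``would $e$ fail post-match if accepted?''). Every observable quantity is then a deterministic function of $(R_e^{\star}, F_e^{\star})_{e \in E}$ and $\bm q$: the observed rejection vector satisfies $\bm r_e = R_e^{\star}\,\bm q_e$, and an edge $e$ appearing in the final \matching{} fails iff $F_e^{\star} = 1$ when $e \in \bm q$, and iff $R_e^{\star} \vee F_e^{\star} = 1$ when $e \notin \bm q$. Under this coupling, the decision-maker with $\bm q' = \bm q + \bm 1_{e^*}$ observes a strictly richer $\sigma$-algebra than under $\bm q$, namely the additional signal $R_{e^*}^{\star}$.

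Next, I would construct a \emph{mirror} policy for the $\bm q'$-decision-maker: given $\bm r'$, discard its $e^*$ coordinate and compute $\bm x \equiv M^{\texttt{FA}}(\bm r'|_{\bm q})$ as if only $\bm q$ had been queried; if $R_{e^*}^{\star} = 1$, additionally delete any cycle in $\bm x$ containing $e^*$ and truncate any chain in $\bm x$ just before $e^*$, so the resulting matching lies in $\mathcal{M}(\bm r')$. A pointwise comparison of realized weights then delivers the key inequality. If $R_{e^*}^{\star} = 0$, the mirror matching equals the $\bm q$-matching and every edge has identical success status under both query sets, so the weights coincide. If $R_{e^*}^{\star} = 1$, any cycle containing $e^*$ already contributes $0$ under $\bm q$ (since $e^*$ necessarily fails), so deleting it costs nothing; and any chain containing $e^*$ contributes exactly its prefix weight up to $e^*$ under $\bm q$, which the truncated chain contributes as well (other edges' failure statuses are unchanged by the coupling). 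Hence the mirror policy's realized weight pointwise dominates that of $M^{\texttt{FA}}(\bm r; \bm q)$.

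Taking expectations, and then invoking optimality of $M^{\texttt{FA}}(\bm r'; \bm q')$ — which by definition maximizes conditional expected weight given $\bm r'$ over all $\mathcal{M}(\bm r')$-feasible matchings, and therefore weakly dominates the $\sigma(\bm r')$-measurable mirror policy — yields $V^S(\bm q') \geq V^S(\bm q)$, completing the induction step. I expect the main obstacle to be that both $\prob_R$ and $\prob_F$ depend on $\bm q$, so a given edge's effective failure mechanism is \emph{not} identical across the two query sets; the latent $(R_e^{\star}, F_e^{\star})$ decomposition is precisely what the independence hypothesis buys, and is what legitimizes the pointwise comparison. A minor additional check is that truncating a chain at a rejected $e^*$ recovers exactly the prefix weight $F(c, \cdot)$ would assign to the original chain when $e^*$ is the first failure, which follows from the explicit prefix form of $F$ for chains.
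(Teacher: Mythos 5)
Your overall architecture is sound and, in fact, parallels the paper's: both arguments reduce to showing that adding a single edge weakly helps, both hinge on the optimality of $M^{\texttt{FA}}$ over all feasible \matching{}s given the richer observation (your ``mirror policy'' step is exactly the paper's Lemma~\ref{lem:add-edge-improvement}), and both then chain single-edge additions. Where you differ is in how the per-cycle/per-chain comparison is established: the paper works in expectation, factoring $\expect[F(c,\cdot)]$ into a product of per-edge terms via independence (Lemma~\ref{lem:cycle-weights}), whereas you work sample-path-wise via a coupling on latent variables $(R_e^{\star},F_e^{\star})$. The coupling version is arguably cleaner and makes the ``value of information'' structure explicit.

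However, there is a genuine gap: your coupling smuggles in a distributional assumption that does \emph{not} follow from independence, and you misattribute it to independence in your closing remark. Declaring that a non-queried edge is unusable iff $R_e^{\star}\vee F_e^{\star}=1$ while a queried edge is unusable iff $R_e^{\star}=1$ or ($R_e^{\star}=0$ and $F_e^{\star}=1$) forces the overall unusability probability of an edge to be \emph{identical} whether or not it is queried. The paper's model allows arbitrary $\prob_F(\bm q,\bm r)$; in particular it allows querying to make an edge strictly \emph{more} likely to fail (the paper exhibits such a distribution explicitly, and under it the proposition is false even for $M^{\texttt{FA}}$), and it also allows querying to strictly \emph{reduce} failure probability (the \emph{KPD} distribution), which your two-variable coupling cannot represent at all. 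What is actually needed is the paper's Assumption~\ref{assumption:additional-queries}: querying additional edges cannot increase any edge's overall rejection-or-failure probability. Given that assumption, a monotone coupling of the kind you want exists (add a third latent variable for the excess failure mass of non-queried edges), and your argument goes through; without it, no coupling exists and the statement fails. So the fix is small --- state the stochastic-dominance assumption explicitly and generalize the coupling --- but as written the proof proves a weaker statement (equality of marginal unusability) and presents the crucial hypothesis as a consequence of independence rather than as an additional assumption on $\prob_R$ and $\prob_F$.
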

While Propositions~\ref{prop:non-monotonic} and~\ref{prop:non-submodular} state that single-stage edge selection is challenging in the worst case, our computational results suggest that these problems are often easier on realistic exchanges.

\section{Solving the Policy-Constrained Edge Query Problem}\label{sec:solving-the-query-problem}

%
%
First we propose an exhaustive tree search which returns an optimal solution to Problem~\ref{eq:singlestage} given enough time.
Building on this, we propose a Monte Carlo Tree Search algorithm and a simple greedy algorithm.
Our multi-stage approaches are very similar to these, and can be found in Appendix~\ref{app:alg}.

%
Our optimal exhaustive search uses a \emph{search tree} where each tree node corresponds to an edge subset in $\bm q \in \mathcal E$.
The children of node $\bm q$ correspond to any $\bm q'\in \mathcal E$ which are equivalent to the parent $\bm q$, but include one additional edge: $C(\bm q) \equiv \{ (\bm q + \bm q') \; \forall \bm q' \in \mathcal E : \, |\bm q'| = 1\; \mid \; (\bm q + \bm q') \in \mathcal E \} \; $.
We say that edge sets (or tree nodes) containing $L$ edges are on the $L^{th}$ \emph{level} of the tree.
We refer to nodes with no children as \emph{leaf nodes}.
Unlike other tree search settings, the optimal solution to Problem~\ref{eq:singlestage} may be at \emph{any} node of the tree, not only leaf nodes; this is a consequence of non-monotonicity (see Proposition~\ref{prop:non-monotonic}).
The tree defined by root node $\bm q = \bm 0$ and child function $C(\bm q)$ contains all legal edge subsets in $\mathcal E$, when $\mathcal E$ is a matroid.
%
%
Thus, \emph{any} exhaustive tree search algorithm (such as depth-first search) will identify an optimal solution, given enough time and memory.

Of course exhaustive search is only tractable if $\mathcal E$ is small. 
Consider the class of \emph{budgeted} edge sets $\mathcal E(\Gamma)$ used in our experiments: $\mathcal E(\Gamma) \equiv \{\bm q \in \{0, 1\}^{|E|} \; \mid \; |\bm q|\leq \Gamma \}$ (edge sets containing at most $\Gamma$ edges).
The number of edge sets in $\mathcal E(\Gamma)$ grows roughly exponentially in $\Gamma$ and $|E|$, and is impossible to enumerate even for small graphs.
Suppose a graph has $50$ edges and we have an edge budget of five: there are over two million edge sets in $\mathcal E(5)$.   
Even small exchange graphs can have thousands of edges, and thus $\mathcal E(\Gamma)$ cannot be enumerated.
Therefore, we propose search-based approach.

\noindent \textbf{Monte Carlo Tree Search for Edge Selection (\MCTSS{}):}
We propose a tree-search algorithm for single-stage edge selection, \MCTSS{}, based on Monte Carlo Tree Search (MCTS), with the Upper Confidence for Trees (UCT) algorithm~\cite{kocsis2006bandit}.
Our approach keeps track of a \emph{value} (the objective value of Problem~\ref{eq:singlestage}) and a UCB value estimate for each node, and these values are updated during sampling.
The formula used to estimate a node's UCB value is
$$ \frac{\frac{U}{N} - V^{min}}{V^{max}- V^{min}} + \sqrt{N^P/N} $$
where $U$ is the ``UCB value estimate'' calculated by \MCTSS{}, $N$ is the number of visits to the node, $N^P$ is the number of visits to the node's parent, and $V^{max}$ and $V^{min}$ are the largest and smallest node values encountered during search.

%
When the set of tree nodes is too large to enumerate UCT can use a huge amount of memory--by storing values for each visited node.
To limit both memory use and runtime, we incrementally search the tree from a temporary root node.
Beginning from the root (the the empty edge set), we use UCB sampling on the next $L$ levels of nodes--where $L$ is a small fixed integer.
After a fixed time limit, sampling stops and we set the \emph{new} root node to the current root's best child according to its UCB estimate--using the method of~\cite{kocsis2006bandit}. 
This process repeats until we reach the final level of the search tree.
Algorithm~\ref{alg:uct-single-stage} gives a pseudocode description of \MCTSS{}, which uses Algorithm~\ref{alg:sample} as a submethod.
While often successful, MCTS requires extensive training and parameter tuning.
As a simpler alternative, we propose a greedy algorithm. %

\noindent \textbf{Single-Stage Greedy Algorithm: \GREEDYS{}.}
Like \MCTSS{}, our greedy algorithm (\GREEDYS{}) begins with the empty edge set as the root node, and iteratively searches deeper levels of the tree.
However unlike \MCTSS{}, \GREEDYS{} simply selects the child node with the greatest objective value in Problem~\ref{eq:singlestage}--that is, \emph{greedily} improving the objective value; see Appendix~\ref{app:alg} for a pseudocode description. 

\begin{minipage}[t]{0.46\textwidth}
\vspace{0pt} 
{\small
\begin{algorithm}[H]
\SetAlgoNoLine
\DontPrintSemicolon

(input) $K$: maximum size of any legal edge set\;
(input) $T$: time limit per level\;
(input) $L$: number of look-ahead levels\;
\;
$\bm q^R \gets \bm 0$ \quad root node (no edges)\;
$\bm q^* \gets \bm 0$ \quad the best visited node\;
$V^* \gets$ objective value of $\bm q^*$\;

\For{$N=1, \dots, K$}{
	$M \gets \min\{N + L, K \}$ \; 
	$Q \gets$ all nodes in levels $N$ to $M$\;
    $U[\bm q] \gets 0 \; \forall \bm q\in Q$  \quad UCB value estimate \;
    $V[\bm q] \gets 0 \;\forall \bm q\in Q$  \quad objective value \;
    $N[\bm q] \gets 0 \;\forall \bm q\in Q$ \quad number of visits \;
    \While{less than time $T$ has passed}{
    \texttt{Sample}($\bm q^R$, $M$)\;
    }
    $\bm q^R \gets \argmax_{\bm q \in C(\bm q^R)} U[\bm q]$\;
    Delete $U[\cdot]$, $V[\cdot]$, and $N[\cdot]$\;
}
\KwRet $\bm q^*$
\caption{\MCTSS{}: Tree Search for Single-Stage Edge Selection}
\label{alg:uct-single-stage}
\end{algorithm}
}
\end{minipage}
\hfill
\begin{minipage}[t]{0.52\textwidth}
\vspace{0pt} 
{\small
\begin{algorithm}[H]
\SetAlgoNoLine
\DontPrintSemicolon
(input) $\bm q$, $M$\;
\;
$N[\bm q]\gets N[\bm q] + 1$\;
$V[\bm q] \gets$ objective of edge set $\bm q$ in Problem~\ref{eq:singlestage} \;
\If{$V[\bm q] > V^*$}{
    $\bm q^* \gets \bm q$, \,$V^* \gets V[\bm q]$\;
}
\If{$\bm q$ has no children}{
\KwRet $V[\bm q]$}
\If{$\bm q$ has children}{
\eIf{$|\bm q|<M$}{
    $\bm q' \gets \argmax_{\bm q \in C(\bm q^R)} U[\bm q] + \text{UCB}[\bm q]$\;
    $U[\bm q] \gets U[\bm q] +$ \texttt{Sample}($\bm q'$, $M$)
}{
$\bm q' \gets $ a random descendent of $\bm q$ at any level\;
$V' \gets$ objective value of $\bm q'$ in Problem~\ref{eq:singlestage}\;
\If{$V' > V^*$}{
    $\bm q^* \gets \bm q'$, \,$V^* \gets V'$\;
}
$U[\bm q] \gets U[\bm q] + V'$
}
}
\caption{\texttt{Sample}: Sampling function used by \MCTSS{}}\label{alg:sample}
\end{algorithm}
}
\end{minipage}

\noindent \textbf{Runtime.}
Our methods rely on an ``oracle’’ to solve the NP-hard kidney exchange matching problem; while state-of-the-art methods solve real-sized instances of these problems in fractions of a second, there is no guaranteed bound for absolute runtime. 
Instead, we can report the \emph{number of calls} to this oracle for each method as a measure of complexity. 
Both benchmark methods (max-weight matching and failure-aware~\cite{dickerson2019failure}) as well as \texttt{IIAB}~\cite{blum2020ignorance} use exactly one oracle call; i.e., they are $O(1)$. 
Both \texttt{Greedy} and \texttt{MCTS} use a fixed number of samples ($M$) to evaluate the objective of an edge set. \texttt{Greedy} evaluates the objective of an edge set exactly $\Gamma$ times; thus, \texttt{Greedy} is $O(M\cdot \Gamma)$. 
Finally, \texttt{MCTS} can in theory visit all potential edge sets of size at most $\Gamma$ (i.e., an exhaustive search), which is $O(M\cdot\sum_{\gamma = 1}^\Gamma {|E| \choose \gamma})$.
Since this version of \texttt{MCTS} is intractable in both runtime and memory, Algorithm~\ref{alg:uct-single-stage} imposes reasonable limits on our implementation.

\section{Computational Experiments}\label{sec:expt}

We conduct a series of computational experiments using both synthetic data, and real kidney exchange data from UNOS; all code for these experiments is available online.\footnote{\url{https://github.com/duncanmcelfresh/kpd-edge-query}}
%
In these experiments, ``legal'' edge sets are the budgeted edge sets defined as $\mathcal E(\Gamma) \equiv \{\bm q\in \{0, 1\}^{|E|}\, \mid \, |\bm q|\leq \Gamma \}$.
In Sections~\ref{sec:expt-single-stage} and~\ref{sec:expt-multi} we present results in the single- and multi-stage edge selection settings, respectively.
We use two types of data for these experiments:

\noindent \textbf{Real Data.}
We use exchange graphs from the United Network for Organ Sharing (UNOS), representing UNOS match runs between $2010$ and $2019$.  
%
Some of these exchange graphs only have the trivial \matching{} (no cycles or chains), or they have only one non-trivial \matching{}.
We ignore these graphs because the \matching{} policy is a ``constant'' function (to return the one feasible \matching{}) and edge queries cannot change the outcome.
Removing these, we are left with $324$ UNOS exchange graphs.

\noindent \textbf{Synthetic Data.}
We generate random kidney exchange graphs based on directed Erd\H{o}s-R\'{e}nyi graphs defined using parameters $N$ and $p$: let $V$ be a fixed set of $N$ vertices; for each pair of vertices $(V_1, V_2)$ there is an edge from $V_1$ to $V_2$ with probability $p$, and an edge from $V_2$ to $V_1$ with probability $p$ (independent of the edge from $V_1$ to $V_2$).  
%
%
Any vertices with no incoming edges are considered NDDs. %

%
In these experiments edge rejections and failures are independently distributed for each edge $e$; let $P_R$ be the rejection probability, $P_Q$ is the post-match success probability if $e$ is queried/accepted, and $P_N$ is the success probability if $e$ is not queried.
To simulate edge rejections and failures we use two synthetic edge distributions: \emph{Simple} and \emph{KPD}.
In the \emph{Simple} distribution, $P_R=0.5$, $P_Q=1$, and $P_N=0.5$ for all edges.
The \emph{KPD} distribution is inspired by the fielded exchange setting from which we draw our real underlying compatibility graphs.
According to UNOS, about $34\%$ of all edges are rejected by a donor or recipient pre-match~\cite{Leishman19:Challenges}; we draw $P_R$ uniformly from $U(0.25, 0.43)$ for each edge.
Edges ending in highly-sensitized patients (who are often less healthy and more likely to be incompatible) are considered high-risk; for these edges we draw $P_Q$ from $U(0.2, 0.5)$ and $P_N$ from $U(0.0, 0.2)$.
For other edges we draw $P_Q$ from $U(0.9, 1.0)$ and $P_N$ from $U(0.8, 0.9)$.
%

\subsection{Single-Stage Edge Selection Experiments}\label{sec:expt-single-stage}

In this section we compare against the baseline of a max-weight \matching{} \emph{without} edge queries (using policy $M^\texttt{MAX}(\cdot)$).
Many fielded kidney exchanges use a variant of this matching policy, so by comparing against this baseline we are illustrating the impact of edge queries on the state-of-the-art matching policies used in many real exchanges.
Let $V_X$ be the objective%
\footnote{All objective values are estimated using up to $1000$ sampled rejection scenarios (see Appendix~\ref{app:obj}), as it is intractable to evaluate the exact objective of large edge sets.}
of Problem~\ref{eq:singlestage} achieved by method $X$, we calculate $\Delta^\texttt{MAX}$ (the relative difference from baseline) as
$\Delta^\texttt{MAX} \equiv (V_X - V^S(\bm 0))/V^S(\bm 0)$.
A value of $\Delta^\texttt{MAX} =0$ means that method $X$ did not improve over the baseline, a value of $\Delta^\texttt{MAX}=1$ means that $X$ achieved an objective $100\%$ greater than the baseline, and so on.
Furthermore a value of $\Delta^\texttt{MAX}>0$ means that method $X$ \emph{increases} the objective by querying edges, while $\Delta^\texttt{MAX}<0$ means that method $X$ decreases the objective by querying edges.

%
\noindent \textbf{Result: \GREEDYS{} is essentially Optimal with small random graphs.}
First we investigate the \emph{difficulty} of edge selection.
Using random graphs, we compare \GREEDYS{} to the \emph{optimal} solution to Problem~\ref{eq:singlestage}, found by exhaustive search (\OPT{}). 
We generate three sets of $100$ random graphs with $N=50$, $75$, and $100$ vertices, and each with $p=0.01$.
%
For all graphs we run both \OPT{} and \GREEDYS{} with edge budget $3$; we calculate the \emph{optimality gap} of \GREEDYS{} as $\texttt{\%OPT}\equiv 100\times (V_\OPT{} - V_\GREEDYS{}) / V_\OPT{}$, where $V_X$ denotes the objective achieved by method $X$.
($V_\OPT{} > 0$ in all graphs used in these experiments.)
If $\texttt{\%OPT}=0$ then \GREEDYS{} returns an optimal solution, and $\texttt{\%OPT}>0$ means that \GREEDYS{} is not optimal.
Table~\ref{tab:grd-opt-and-iiab} (left) shows the number of random graphs binned by $\%\texttt{OPT}$, as well as the maximum $\%\texttt{OPT}$ over all graphs.
For each $N$, \GREEDYS{} returns an optimal solution for at least $90$ of the $100$ graphs; the \emph{maximum} $\%\texttt{OPT}$ over all graphs is $2.8$.

In other words, \GREEDYS{} always returns an \emph{optimal} or nearly-optimal set of edges to query for small random graphs.
This is somewhat unexpected, since the edge selection problem is both non-monotone and non-submodular (see Section~\ref{sec:edge-selection-problem}).

 \begin{table}
 \caption{\footnotesize Left: Optimality gap for \GREEDYS{}, over $100$ random graphs with $p=0.01$ and various $N$, with edge budget $\Gamma=3$; bottom row shows the maximum value of $\%\texttt{OPT}$ over all graphs. Right: Single-stage results on UNOS graphs using the variable IIAB edge budget (top rows), and the failure-aware method (bottom row). Columns $P_{X}$ indicates the $X^{th}$ percentile of $\Delta^\texttt{MAX}$ over all UNOS graphs.\label{tab:grd-opt-and-iiab}}
   \begin{minipage}[t]{0.4\textwidth}
   \footnotesize
   \centering
    \vspace{0.05in}
    \renewcommand*{\arraystretch}{1.2} 
    \begin{tabular}[t]{@{}lccc@{}}\toprule
    & \multicolumn{3}{c}{Num. Graphs (out of 100)} \\ \cmidrule{2-4}
    $\%\texttt{OPT}$ &  {\scriptsize $N=50$} &  {\scriptsize $N=75$} &  {\scriptsize $N=100$} \\
     \midrule
    $[0, 0.1]$ & $93$ & $93$ & $90$\\ 
    $(0, 1]$ & $5$ & $4$ & $9$\\ 
    $(1, 2]$ & $1$ & $3$ & $1$\\ 
    $(2, 100]$ & $1$ & $0$ & $0$\\ \midrule 
    Max $\%\texttt{OPT}$ & 2.8 & 1.5 & 1.0 \\\bottomrule
    \end{tabular}
    \end{minipage}
    \hfill
 \begin{minipage}[t]{0.55\textwidth}
    \footnotesize
    \centering
        \vspace{0.05in}
        \renewcommand*{\arraystretch}{1.2}    \setlength{\tabcolsep}{4pt}
    \begin{tabular}[t]{@{}lrrrcrrr@{}}
    \toprule
     & \multicolumn{3}{c}{\emph{Simple} edge dist.} && \multicolumn{3}{c}{\emph{KPD} edge dist.}  \\
    \cmidrule{2-4} \cmidrule{6-8}
    \textbf{Method} & $P_{10}$  & $P_{50}$  & $P_{90}$ && $P_{10}$ & $P_{50}$ & $P_{90}$ \\ \midrule
    \MCTSS{} & $0.40$  & $0.67$ & $1.11$ && $0.05$                  & $0.45$ & $3.44$ \\
    \GREEDYS{}        & $0.47$                     & $0.64$ & $1.00$ && $0.02$                  & $0.47$ & $3.44$ \\
    Random        & $0.00$                     & $0.10$ & $0.46$ && $-0.11$                  & $0.00$ & $0.63$ \\
    \texttt{IIAB}          & $0.21$                     & $0.45$ & $0.89$  && $-0.27$                 & $0.12$ & $2.24$ \\ \midrule
    Fail-Aware & $0.00$                     & $0.09$ & $0.23$ && $-0.27$\textsuperscript{$\dagger$} & $0.00$\textsuperscript{$\dagger$} & $2.17$\textsuperscript{$\dagger$} \\ \bottomrule
    \end{tabular}
    \end{minipage}
    \hfill
\end{table}
\customfootnotetext{$\dagger$}{We use an approximation of Fail-Aware for the \emph{KPD} dist.; \emph{true} Fail-Aware should always have $\Delta^\texttt{MAX}>0$.}

\noindent \textbf{Result: \GREEDYS{} is essentially monotonic with UNOS graphs.} 
We test \GREEDYS{} on real UNOS graphs, using maximum budget $100$.
Figure~\ref{fig:grd-large-budget} shows the median $\Delta^\texttt{MAX}$ over all UNOS graphs, with shading between the $10^{th}$ and $90^{th}$ percentiles.
Larger edge budgets almost never decrease the objective achieved by \GREEDYS{}, and \GREEDYS{} \emph{never} produces a worse outcome than the baseline.
Thus--in our setting--single-stage edge selection is effectively monotonic in our setting, and \GREEDYS{} is an effective method.

\noindent \textbf{Result: \MCTSS{} and \GREEDYS{} are nearly equivalent with UNOS graphs.}
%
%
We compare all methods on UNOS graphs, using smaller, more-realistic edge budgets from $1$ to $10$.
For \MCTSS{} we use a 1-hour time limit per edge ($\Gamma$ hours total).
Figures~\ref{fig:unos-simple} and~\ref{fig:unos-kpd} compare $\Delta^\texttt{MAX}$ for \MCTSS{}, \GREEDYS{}, and random edge selection, for the \emph{Simple} and \emph{KPD} edge distributions, respectively.
We draw two conclusions from these results: (1) \MCTSS{} and \GREEDYS{} produce almost identical results, further suggesting that \GREEDYS{} is  nearly optimal in our setting; (2) in our setting, edge selection is \emph{effectively} monotonic, as $\Delta^\texttt{MAX}$ almost never decreases.
However Figure~\ref{fig:unos-kpd} gives an example of non-monotonicity for both \GREEDYS{} and Random: in some cases, querying edges can lead to a \emph{worse} outcome than querying no edges.

\noindent \textbf{Result: Both \MCTSS{} and \GREEDYS{} outperform benchmarks from the literature.}
We also compare against two state-of-the-art approaches: the edge selection approach of~\cite{blum2020ignorance} (\texttt{IIAB}), which uses a \emph{variable} edge budget that depends on the graph structure; and and the failure-aware \matching{} policy of~\cite{dickerson2019failure} (Fail-Aware\footnote{For the \emph{KPD} distribution we use an approximation of Fail-Aware, which assumes a uniform edge failure probability.}), which does not query edges
To our knowledge, \texttt{IIAB} is the only edge selection method in the literature.
We compare against the Fail-Aware method because it is a state-of-the-art kidney exchange matching policy which aims to maximize the expected matching weight, under a similar edge failure model to ours; we compare against this approach to further illustrate the utility of querying edges.

%
%
Table~\ref{tab:grd-opt-and-iiab} (right) shows a comparison of all edge-selection methods--each using the variable edge budget of \texttt{IIAB}; the bottom row shows results for Fail-Aware.
Both \MCTSS{} and \GREEDYS{} achieve greater $\Delta^\texttt{MAX}$ (in distribution) than both benchmark methods.
This is expected in both cases: \texttt{IIAB} uses a heuristic to select edges to query, which does not consider the final matching weight---the objective of our edge selection problem; on the other hand, both \MCTSS{} and \GREEDYS{} are designed to maximize this objective.
We do not expect Fail-Aware to out-perform any edge selection methods, since Fail-Aware does not have access to information revealed after edge queries.

It is notable that \GREEDYS{} performs better than \MCTSS{} (in distribution).
This likely means that \MCTSS{} is \emph{under-trained}---that the time and memory limits used in our implementation are too restrictive; alternatively, this indicates that \GREEDYS{} is simply very effective in our setting.

\begin{figure}	
    %
	\begin{subfigure}[t]{0.49\textwidth}
    	\centering
    	\includegraphics[width=\linewidth]{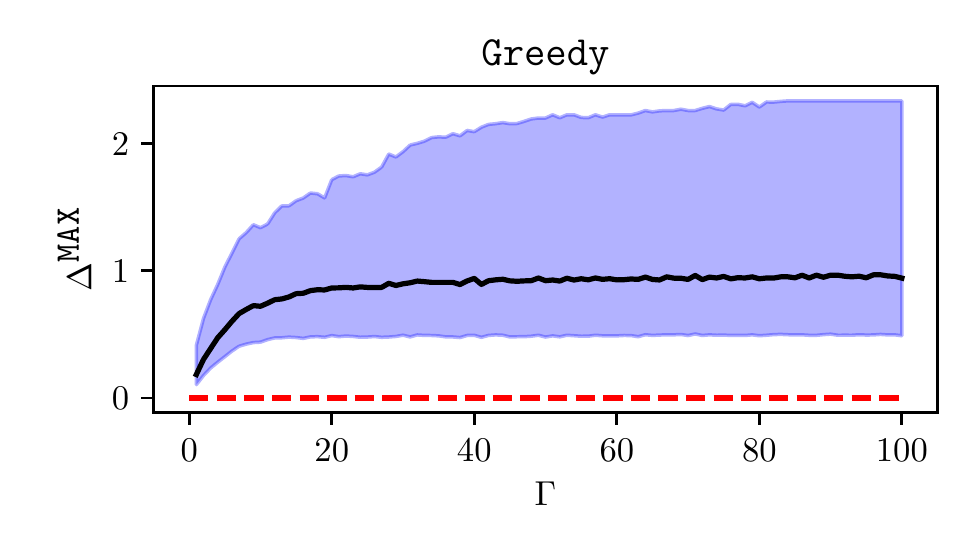}
    	\caption{Single-stage, \emph{Simple} distribution.\label{fig:grd-large-budget}}
    \end{subfigure}
\hfill
	\begin{subfigure}[t]{0.49\linewidth}
		\centering
		\includegraphics[width=\linewidth]{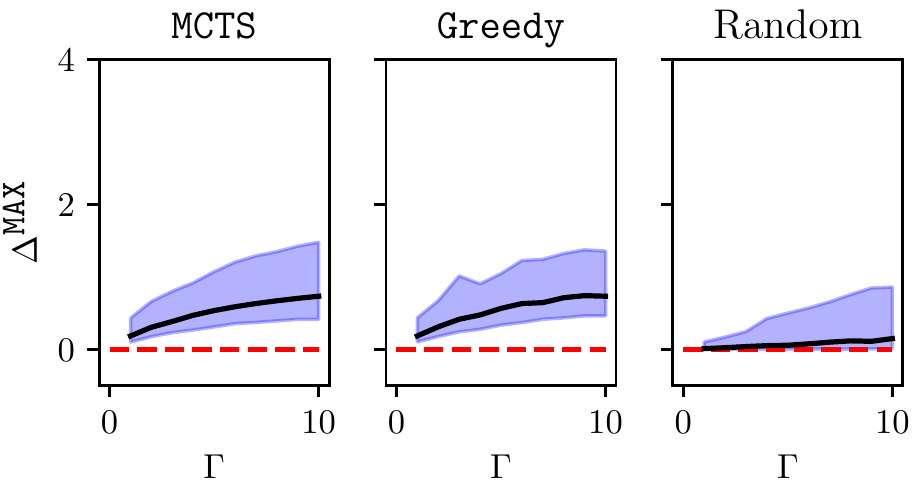}
		\caption{Single-stage, \emph{Simple} distribution.}\label{fig:unos-simple}
	\end{subfigure}
	\hfill
	%
	\begin{subfigure}[t]{0.49\textwidth}
		\centering
		\includegraphics[width=\linewidth]{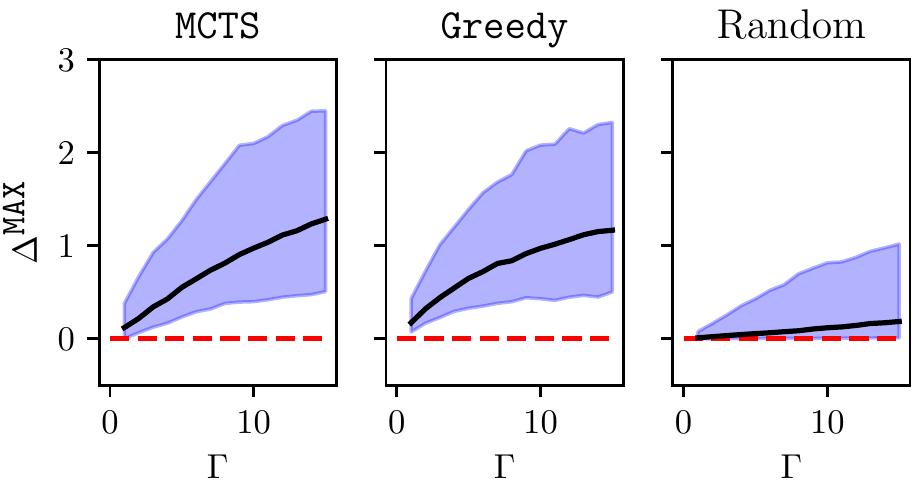}
		\caption{Multi-stage, \emph{Simple} distribution.\label{fig:unos-multistage}}
	\end{subfigure}
	\hfill
	%
	\begin{subfigure}[t]{0.49\textwidth}
	\centering
	\includegraphics[width=\linewidth]{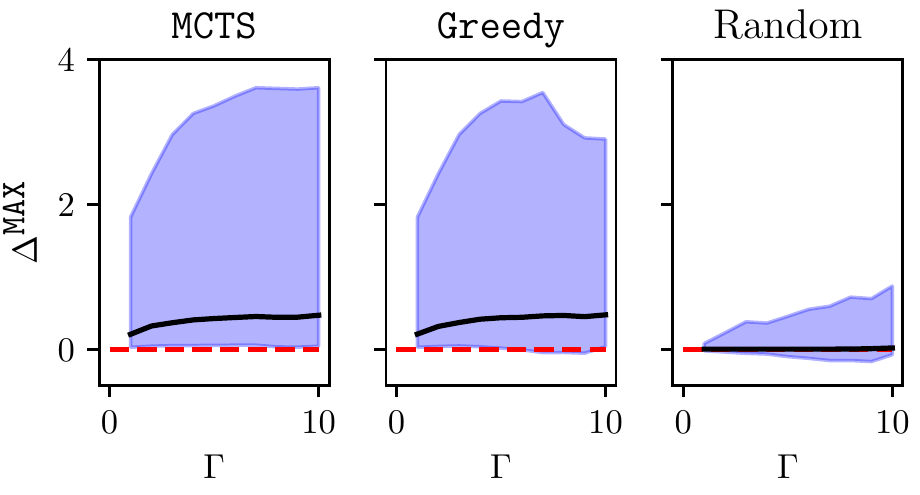}
	\caption{Single-stage, \emph{KPD} distribution.}\label{fig:unos-kpd}
	\end{subfigure}
	\caption{Results for UNOS graphs. 
	Right: edge budget up to $10$ for the \emph{Simple} distribution (top) and the \emph{KPD} distribution (bottom). 
	Top-left: \GREEDYS{} with edge budget up to $100$, for the simple distribution. 
	Bottom-left: multi-stage methods using the \emph{Simple} distribution.
	In all plots, a solid line indicates median $\Delta^\texttt{MAX}$ over all UNOS graphs, and shading is between the $10^{th}$ and $90^{th}$ percentiles; a dotted line indicates the baseline.
	}\label{fig:unos}
\end{figure}

\subsection{Multi-Stage Edge Selection Experiments on UNOS Graphs}\label{sec:expt-multi}
We run initial multi-stage edge selection experiments on all UNOS graphs with the \emph{Simple} edge distribution.
For each graph we test our multi-stage variants of \MCTSS{} and \GREEDYS{}, and compare with a baseline of random edge selection; as before, \MCTSS{} uses a 1-hour training time per level. 
It is substantially harder to evaluate the multi-stage objective, as each edge edge-selection method changes depending on rejections observed in prior stages.
Similarly, the \MCTSS{} search tree is orders of magnitude larger in the multi-stage setting: each node in tree corresponds to both an edge set \emph{and} a rejection scenario (see Appendix~\ref{app:alg}).

In these initial experiments we evaluate each method on $10$ edge rejections \emph{realizations} (only a small subset).
We estimate $\Delta^\texttt{MAX}$ for each method and each graph by averaging the final \matching{} weight over all realizations.
Figure~\ref{fig:unos-multistage} shows the results of these experiments.

These initial multi-stage results are quite similar to our single-stage results.
However it is notable that the objective value in the multi-stage setting is somewhat higher than in the single-stage setting--even using the simple method \GREEDYS{}.
Further, this suggests that more can be gained by developing a more sophisticated multi-stage edge selection policy. 
We leave this for future work.

\section{Conclusions and Future Research Directions}
Many planned kidney exchange transplants \emph{fail} for a variety of reasons; these failures greatly reduce the number of transplants that an exchange can facilitate, and increase the waiting time for many patients in need of a kidney.
Avoiding transplant failures is a challenge, as exchanges are often constrained by policy and law in how they match patients and donors.
We consider a setting where exchanges can \emph{pre-screen} certain transplants, while still matching patients and donors using a fixed policy.
We formalize a multi-stage optimization problem based on realistic assumptions about how transplants fail, and how exchanges match patients and donors; we emphasize that these important assumptions are not included in prior work.
While this problem is challenging in theory, we show that it is much easier in practice--with computational experiments using both synthetic data and real data from the United Network for Organ Sharing.
In experiments, we find that pre-screening even a small number of potential transplants (around $10$) significantly increases the overall quality of the final match--by more than $100\%$ of the original match weight.

Our initial study of the pre-screening problem suggests several areas for future work.
First we assume that the distribution of transplant failures is known, when in reality only rough approximations of these distributions are available.
Second, we assume that exchange participants (donors, recipients, hospitals) are not strategic.
In reality, strategic behavior plays a substantial role in real exchanges~\cite{agarwal2019market}; we expect that participants might behave strategically when responding to pre-screening requests.
Third, our model does not account for equitable treatment of different patients~\cite{McElfresh18:fair}.
For example, it may be the case that pre-screening a transplant decreases the likelihood of the transplant being matched.
That might disproportionately impact highly-sensitized patients, which are both sicker and more difficult to match than other patients. 



\clearpage
\section*{Broader Impact}
This work lives within the broader context of kidney exchange research. 
For clarity, we separate our broader impacts into two sections: first we discuss the impact of kidney exchange in general; then we discuss our work in particular, within the context of kidney exchange research and practice.

\paragraph{Impacts of Kidney Exchange}
Patients with end-stage renal disease have only two options: receive a transplant, or undergo dialysis once every few days, for the rest of their lives. 
In many countries (including the US), these patients register for a deceased donor waiting list--and it can be months or years before they receive a transplant.
Many of these patients have a friend or relative willing to donate a kidney, however many patients are incompatible with their corresponding donor.
Kidney exchange allows patients to ``swap'' their incompatible donor, in order to find a \emph{higher-quality} match, \emph{more quickly} than a waiting list.
Transplants allow patients a higher quality of life, and cost far less, than lifelong dialysis.
About $10\%$ of kidney transplants in the US are facilitated by an exchange.

Finding the ``most efficient'' matching of kidney donors to patients is a (computationally) hard problem, which cannot be solved by hand in most cases. 
For this reason many fielded exchanges use algorithms to quickly find an efficient matching of patients and donors. 
Many researchers study kidney exchange from an algorithmic perspective, often with the goal of improving the number or quality of transplants facilitated by exchanges. 
Indeed, this is the purpose of our paper.

\paragraph{Impacts of Our Work}
In this paper we investigate the impact of pre-screening certain potential transplants (edge) in an exchange, prior to constructing the final patient-donor matching.
To our knowledge, some modern fielded exchanges pre-screen potential transplants in an ad-hoc manner; meaning they do not consider the impacts of pre-screening on the final matching.
We propose methods to estimate the importance of pre-screening each edge, as measured by the change in the overall number and quality of matched transplants.\footnote{Quality and quantity of transplants is measured by transplant weight, a numerical representation of transplant quality (e.g., see UNOS/OPTN Policy 13 regarding KPD prioritization points \texttt{\url{https://optn.transplant.hrsa.gov/media/1200/optn_policies.pdf}}).}
Importantly, our methods do not require a change in matching policy; instead, they indicate to policymakers which potential transplants are important to pre-screen, and which are not.  
The impacts of our contributions are summarized below:

\noindent \textbf{Some potential transplants cannot be matched}, because they cannot participate in a ``legal'' cyclical or chain-like swap (according to the exchange matching policy).
    Accordingly, there is no ``value'' gained by pre-screening these transplants; our methods will identify these potential transplants, and will recommend that they not be pre-screened.
    Pre-screening requires doctors to spend valuable time reviewing potential donors; removing these unmatchable transplants from pre-screening will allow doctors to focus only on transplants that are relevant to the current exchange pool.
    
\noindent \textbf{Some transplants are more important to pre-screen than others}, and our methods help identify which are most important for the final matching. 
    We estimate the value pre-screening of each transplant by \emph{simulating} the exchange matching policy in the case that the pre-screened edge is pre-accepted, and in the case that it is pre-refused.
    
\noindent \textbf{To estimate the value of pre-screening each transplant, we need to know (a) the likelihood that each transplant is pre-accepted and pre-refused, and (b) the likelihood that each planned transplant fails for any reason, after being matched.} 
    These likelihoods are used as input to our methods, and they can influence the estimated value of pre-screening different transplants. 
    Importantly, it may not be desirable to calculate these likelihoods for each potential transplant (e.g., using data from the past). 
    For example if a patient is especially sick, we may estimate that any potential transplant involving this patient is very likely to fail prior to transplantation (e.g., because the patient is to ill to undergo an operation). 
    In this case, our methods may estimate that all potential transplants involving this patient have very low ``value'', and therefore recommend that these transplants should not be pre-screened. 
    One way to avoid this issue is to use the same likelihood estimates for all transplants.
    
\noindent \textbf{To estimate the impact of our methods (and how they depend on the assumed likelihoods, see above), we recommend using extensive modeling of different pre-screening scenarios before deploying our methods in a fielded exchange.}
    This is important for several reasons: first, exchange programs cannot always \emph{require} that doctors pre-screen potential transplants prior to matching. 
    Since we cannot be sure which transplants will be pre-screened and which will not, simulations should be run to evaluate each possible scenario.
    Second, theoretical analysis shows that pre-screening transplants can---in the worst case---negatively impact the final outcome.
    While this worst-case outcome is possible, our computational experiments show that it is very unlikely; this can be addressed further with mode experiments tailored to a particular exchange program. 

\section*{Acknowledgments}
We thank Ruthanne Leishman and Morgan Stuart at UNOS for very helpful early-stage discussions, feedback on our general approach, clarifications regarding data, and knowledge of the intricacies of running a fielded exchange.  Curry, Dickerson, and McElfresh were supported in part by NSF CAREER Award IIS-1846237, DARPA GARD, DARPA SI3-CMD \#S4761, DoD WHS Award \#HQ003420F0035, NIH R01 Award NLM-013039-01, and a Google Faculty Research Award.  Sandholm was supported in part by the National Science Foundation under grants IIS-1718457, IIS-1617590, IIS-1901403, and CCF-1733556, and the ARO under award W911NF-17-1-0082.

\bibliographystyle{abbrvnat}
\bibliography{refs, bib}

\clearpage
\appendix
\section{Kidney Exchange and Edge Failures}\label{app:kpd}

\noindent\textbf{Brief history.} \citet{Rapaport86:Case} proposed the initial idea for kidney exchange, while the first organized kidney exchange, the New England Paired Kidney Exchange (NEPKE), started in 2003--04~\citep{Roth04:Kidney,Roth05:Kidney,Roth05:Pairwise}.  NEPKE has since ceased to operate; at the point of cessation, its pool of patients and donors was merged into the United Network for Organ Sharing (UNOS) exchange in late 2010.  That exchange now contains over 60\% of transplant centers in the US, and performs matching runs via a purely algorithmic approach (as we discuss in Sections~\ref{sec:intro} and~\ref{sec:edge-selection-problem}, and in much greater depth by~\citet{UNOS}, which is mandated to transparently and publicly reveal its matching process).

There are also two large private kidney exchanges in the US, the National Kidney Registry (NKR) and the Alliance for Paired Donation (APD).  They typically only work with large transplant centers.  NKR makes their matching decisions manually and greatly prefers matching incrementally through chains.  APD makes their decisions through a combination of algorithmic and manual decision making. There are also several smaller private kidney exchanges in the US.  They typically only involve one or a couple of transplant centers.  These include an exchange at Johns Hopkins University, a single-center exchange
at the Methodist Specialty and Transplant Hospital in San Antonio, and a single-center exchange at Barnes-Jewish Hospital affiliated with the Washington University in St.\ Louis.  Largely, these exchanges also make their matching decisions via a combined algorithmic and manual process.   These exchanges compete in a variety of ways (e.g., by allowing patient-donor pairs to register in multiple exchange programs); this competition can lead to loss in efficiency~\cite{agarwal2019market} as well as sub-optimal changes to individual exchanges' matching polices~\cite{Li18:Equilibrium}.

There are now established kidney exchanges in the UK~\cite{Manlove15:Paired}, Italy, Germany, Netherlands, Canada, England, Portugal, Israel, and many other countries.  European countries are also explicitly exploring connecting their individual exchanges together in various ways~\cite{biro2019building}.

\noindent\textbf{Edge failures.}
The dilemma of edge failures is illustrated in the example exchange graph shown in Figure~\ref{fig:exchange}.
This exchange consists of a $3$-chain (dashed edges) and two $2$-cycles (solid edges).
Suppose the decision-maker queries edge $e_A$: if $e_A$ is accepted, then the chain from the NDD ($n$) through pairs $(d_1, p_1)$, $(d_2, p_2)$, and $(d_3, p_3)$, i.e., the dashed edges, can be included in the matching.
However if $e_A$ is queried and rejected, then the NDD cannot initiate the chain, and only the cycles may be matched.
In our model, if $e_A$ is not queried then it may still be matched.

\input{example_graph}

\section{Estimating The Objective of Problem~\ref{eq:singlestage}}\label{app:obj}

The objective of the single-stage edge selection problem requires evaluating all rejection scenarios $\bm r\sim \prob_R(\bm q)$, and the support of this distribution grows exponentially in the number of edges $|\bm q|$. 
In computational experiments, to estimate the objective of Problem~\ref{eq:singlestage}, we sample up to $1000$ scenarios from $\prob_R(\bm q)$.
More explicitly: we \emph{exactly} evaluate the objective of edge sets with fewer than $10$ edges; for larger edge sets, we sample the objective using $1000$ draws from $\prob_R(\bm q)$.

Using bootstrapping experiments we demonstrate that our sampling approach is sufficient to accurately estimate the true objective, even for large edge sets.
For $152$ UNOS graphs, we computed edge sets by running \GREEDYS{} with edge budgets ranging from 1 to 100.
%
%
For each edge set, we then sample a subset of $N\in \{10, 30, 50, 100, 1000\}$ rejection scenarios, with replacement, from the set of all sampled edge outcomes. For each edge set and choice of $N$ we repeat 200 times and calculate the sample mean for each replication.
We then compute the standard deviations of these bootstrap sample means to estimate the variance due to sampling.
For each $N$, we calculate the mean sample standard deviation, normalized by the sample mean.
Table~\ref{tab:bootstrap} shows the median normalized standard deviation for all experiments under each $N$, with edge budgets aggregated into 10 bins.
We find that with $N=1000$ samples, the standard deviation was on average only about 2\% of the overall mean value, even for large edge budgets.

%
%
%
%

\begin{table}
\centering
\begin{tabular}{@{}c|ccccc@{}}\toprule
Edge budgets & $N=10$ & $N=30$ & $N=50$ & $N=100$ & $N=1000$ \\
\midrule
1-10 & 0.10 & 0.06 & 0.04 & 0.03 & 0.01 \\
11-20 & 0.12 & 0.07 & 0.05 & 0.04 & 0.01 \\
21-30 & 0.13 & 0.08 & 0.06 & 0.04 & 0.01 \\
31-40 & 0.14 & 0.08 & 0.06 & 0.04 & 0.01 \\
41-50 & 0.14 & 0.08 & 0.06 & 0.04 & 0.01 \\
51-60 & 0.15 & 0.08 & 0.07 & 0.05 & 0.01 \\
61-70 & 0.15 & 0.09 & 0.07 & 0.05 & 0.02 \\
71-80 & 0.16 & 0.09 & 0.07 & 0.05 & 0.02 \\
81-90 & 0.17 & 0.10 & 0.08 & 0.05 & 0.02\\
91-100 & 0.18 & 0.10 & 0.08 & 0.06 & 0.02\\

\bottomrule
\end{tabular}
\caption{Median normalized standard deviation of the bootstrap mean, over $200$ bootstrap samples for each sample size $N$, binned by edge budget.}
\label{tab:bootstrap}
\end{table}



\section{Additional Computational Results}\label{app:results}

First we show results for both single-stage and multi-stage edge selection on random graphs (see \S~\ref{sec:expt} for a description of these graphs).
For $N=50$, $75$, and $100$, we generate $30$ random graphs with $N$ vertices and $p=0.01$.
For each graph we run single-stage experiments with $\Gamma=1, \dots, 10$ and multi-stage experiments with $\Gamma=1, \dots, 15$.
Unlike experiments on UNOS graphs we use a time limit of $20$ minutes per edge; all other parameters are the same.
Figure~\ref{fig:random-single} and~\ref{fig:random-multi} show single-stage and multi-stage results for all random graphs, respectively.
Table~\ref{tab:random-iiab} shows comparisons to \texttt{IIAB} and Fail-Aware for random graphs with $N=50$, $75$, and $100$.

\begin{figure}	
    %
	\begin{subfigure}[t]{0.49\textwidth}
    	\centering
    	\includegraphics[width=\linewidth]{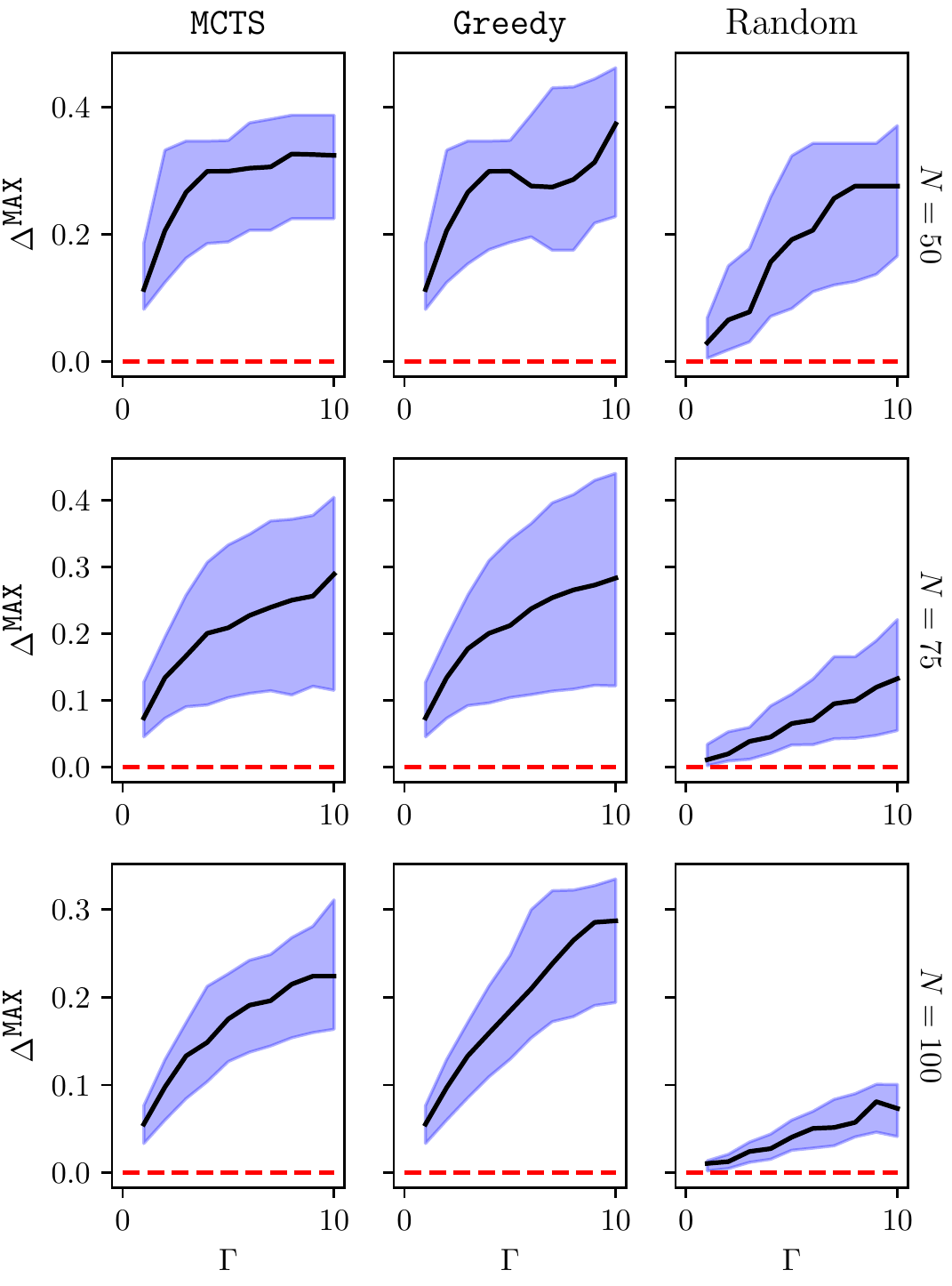}
    	\caption{Single-stage.\label{fig:random-single}}
    \end{subfigure}
\hfill
	\begin{subfigure}[t]{0.49\linewidth}
		\centering
		\includegraphics[width=\linewidth]{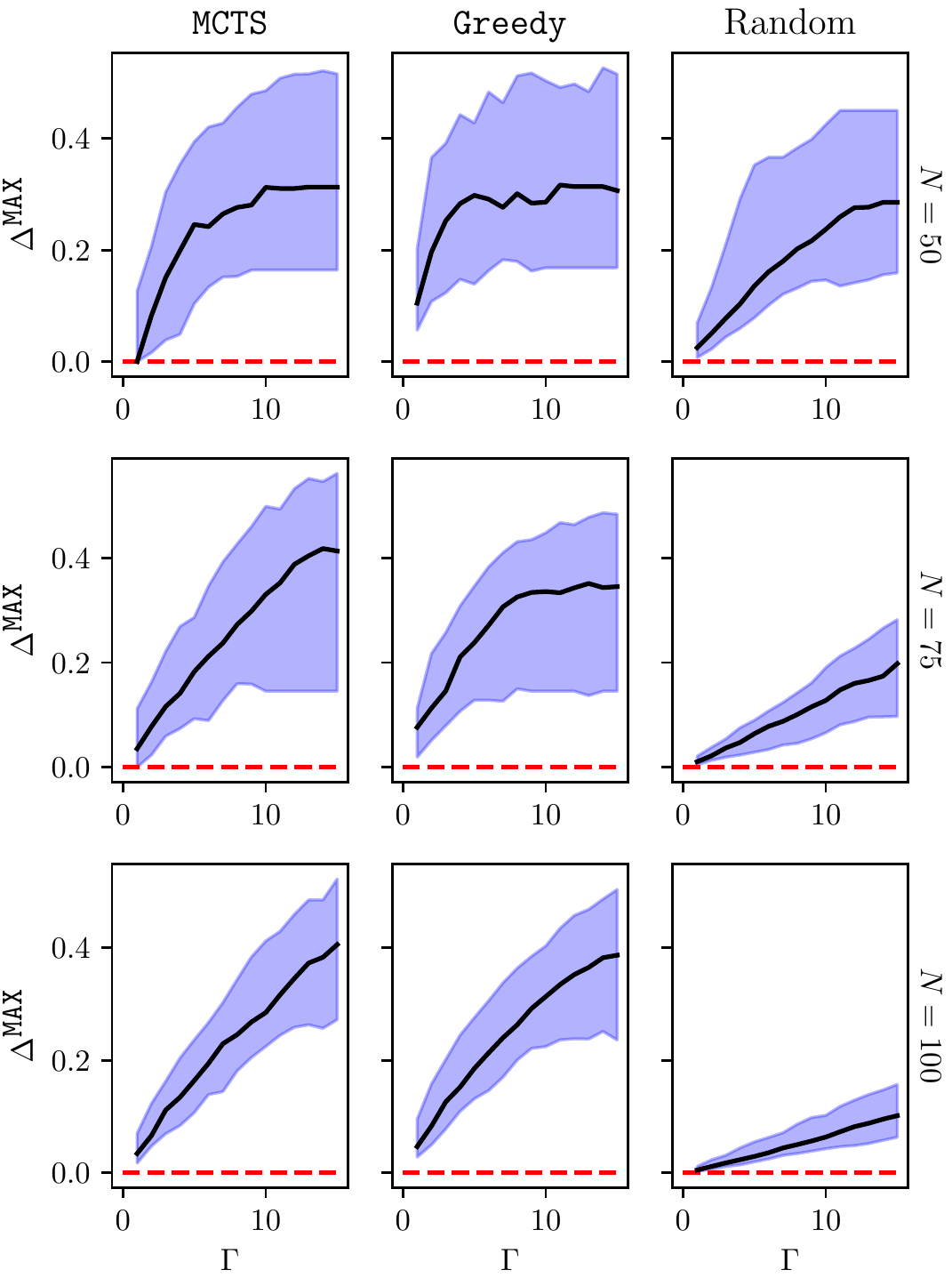}
		\caption{Multi-stage.}\label{fig:random-multi}
	\end{subfigure}
	\caption{Results for $30$ random graphs with edge probability $p=0.01$ and $N=50$ vertices (top row), $N=75$ (middle row), and $N=100$ (bottom row). 
	All experiments use the \emph{Simple} edge distribution.
	In all plots, a solid line indicates median $\Delta^\texttt{MAX}$ over all $30$ random graphs, and shading is between the $10^{th}$ and $90^{th}$ percentiles; a dotted line indicates the baseline.
	}
\end{figure}

\begin{table}[]
\caption{Single-stage results on random graphs with the \emph{Simple} edge distribution, using the variable IIAB edge budget (top rows), and the failure-aware method (bottom row). Columns $P_{X}$ indicates the $X^{th}$ percentile of $\Delta^\texttt{MAX}$ over all $30$ random graphs, for graphs with $N=50$, $75$, and $100$ vertices.\label{tab:random-iiab}}
   \centering
    \renewcommand*{\arraystretch}{1.2}    \setlength{\tabcolsep}{4pt}
    \begin{tabular}[t]{@{}lrrrcrrrcrrr@{}}
    \toprule
     & \multicolumn{3}{c}{$N=50$} && \multicolumn{3}{c}{$N=75$} &&
     \multicolumn{3}{c}{$N=100$} \\
    \cmidrule{2-4} \cmidrule{6-8} \cmidrule{10-12}
    \textbf{Method} & $P_{10}$  & $P_{50}$  & $P_{90}$ && $P_{10}$ & $P_{50}$ & $P_{90}$ && $P_{10}$  & $P_{50}$  & $P_{90}$ \\ \midrule
    \MCTSS{} & $0.22$ & $0.30$ & $0.38$ && $0.11$ & $0.33$ & $0.46$ && $0.23$ & $0.33$ & $0.38$ \\
    \GREEDYS{}  & $0.21$ & $0.30$ & $0.38$ && 
    $0.12$ & $0.32$ & $0.48$ && $0.27$ & $0.39$ & $0.43$\\
    Random & $0.12$ & $0.19$ & $0.23$  &&
    $0.10$ & $0.19$ & $0.28$ &&
    $0.12$ & $0.19$ & $0.23$\\
    \texttt{IIAB} & $0.07$ & $0.24$ & $0.34$ && 
    $0.11$ & $0.22$ & $0.41$ &&
    $0.07$ & $0.24$ & $0.34$ \\ \midrule
    Fail-Aware & $0.00$ & $0.02$ & $0.10$ && $0.00$ & $0.06$ & $0.18$ && $0.00$ & $0.02$ & $0.10$\\ \bottomrule
    \end{tabular}
\end{table}

As with UNOS graphs, results for \MCTSS{} and \GREEDYS{} are quite similar, and both methods achieve larger $\Delta^\texttt{MAX}$ than Random, \texttt{IIAB}, and Fail-Aware.
We make two observations: (1) \GREEDYS{} appears to achieve larger $\Delta^\texttt{MAX}$ than \MCTSS{} in the single-stage setting, likely because of insufficient training time for \MCTSS{}; (2) in the multi-stage setting, \MCTSS{} performs \emph{at least} as well as \GREEDYS{}, and often better.
Observation (2) is consistent with our experiments on UNOS graphs, and is somewhat surprising given that \MCTSS{} used less training time in these experiments.
This suggests that \MCTSS{} may substantially improve over \GREEDYS{} in the multi-stage setting; we leave further investigation to future work.

\section{Proofs for Section~\ref{sec:edge-selection-problem}}\label{app:proofs}

In the proofs of Proposition~\ref{prop:non-monotonic} and Proposition~\ref{prop:non-submodular} we consider a setting where all edges' pre-match rejections and post-match failures are i.i.d., where $P_R=0.5$ is the pre-match rejection probability, $P_Q=1.0$ is the post-match success probability if the edge is queried-and-accepted, and $P_N=0.5$ is the success probability if $e$ is not queried.
That is, queried edges have rejection probability $0.5$, accepted edges have zero failure probability, and non-queried edges have failure probability $0.5$.

\subsection{Proof of Proposition~\ref{prop:non-monotonic}}
(Proof by counterexample.)
We provide an example where querying a single edge results in a \emph{lower} objective value in Problem~\ref{eq:singlestage} (i.e., final expected \matching{} weight) than querying no edges--when using the max-weight \matching{} policy $M^{\texttt{MAX}}(\cdot)$.

\tikzstyle{altruist}=[circle,
  thick,
  minimum size=1.0cm,
  draw=black!50!green!80,
  fill=black!20!green!20
]

\tikzstyle{every path}=[line width=1pt]

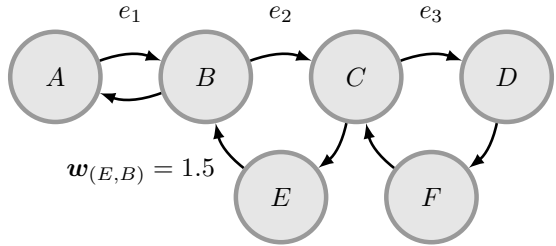
\begin{figure}
\centering
\begin{tikzpicture}[>=latex]

  \tikzstyle{fake}=[rectangle,minimum size=8mm,opacity=0.0]
  
  \node (a) [shape=circle,
    draw=gray!80,
    line width=0.6mm,text=black,font=\bfseries,
    fill=gray!20,
    minimum size=1.2cm,
  ] at (0,0) {$A$};

  \node (b) [shape=circle,
    draw=gray!80,
    line width=0.6mm,text=black,font=\bfseries,
    fill=gray!20,
    minimum size=1.2cm,
  ] at (2,0) {$B$};
  
  \node (c) [shape=circle,
    draw=gray!80,
    line width=0.6mm,text=black,font=\bfseries,
    fill=gray!20,
    minimum size=1.2cm,
  ] at (4,0) {$C$};
  
    \node (d) [shape=circle,
    draw=gray!80,
    line width=0.6mm,text=black,font=\bfseries,
    fill=gray!20,
    minimum size=1.2cm,
  ] at (6,0) {$D$};
  
      \node (e) [shape=circle,
    draw=gray!80,
    line width=0.6mm,text=black,font=\bfseries,
    fill=gray!20,
    minimum size=1.2cm,
  ] at (3,-1.6) {$E$};
  
    \node (f) [shape=circle,
    draw=gray!80,
    line width=0.6mm,text=black,font=\bfseries,
    fill=gray!20,
    minimum size=1.2cm,
  ] at (5,-1.6) {$F$};
 
  \draw[->] (a) edge [bend left=20] node[midway,above=0.3cm,rotate=0] {$e_1$}(b);
  \draw[->] (b) edge [bend left=20] (a);
  
  \draw[->] (b) edge [bend left=20] node[midway,above=0.3cm,rotate=0] {$e_2$} (c);
  \draw[->] (c) edge [bend left=20] (e);
  \draw[->] (e) edge [bend left=20] node[midway,below left,rotate=0] {$\bm w_{(E, B)}=1.5$} (b);

  \draw[->] (c) edge [bend left=20] node[midway,above=0.3cm,rotate=0] {$e_3$} (d) ;
  \draw[->] (d) edge [bend left=20] (f);
  \draw[->] (f) edge [bend left=20] (c);
  

\end{tikzpicture} 
\caption{
Exchange graph for Propositions~\ref{prop:non-monotonic} and~\ref{prop:non-submodular}.
All edges have weight $1$ except for edge $(E, B)$, which has weight 1.5.
}\label{fig:proof-graph}
\end{figure}

Consider the exchange graph in Figure~\ref{fig:proof-graph}; edge $(E, B)$ has weight 1.5, while all other edges have weight $1$.
First we consider the objective due to querying no edges, $V^S(\bm 0)$. 
In this case, no edges can be rejected pre-match, the max-weight \matching{} includes cycle $(C, D, F)$ (expected weight $3\times (1/2)^3 = 3/8$) and cycle $(A, B)$ (expected weight $2 \times (1/2)^2 = 1/2$), with total expected \matching{} weight $7/8$. 
That is, $V^S(\bm 0)=7/8$.

Next consider the objective due to querying only edge $e_3=(C, D)$, and let  $\bm q'$ denote edge set $\{e_3\}$.
With probability $1/2$, $e_3$ is rejected and cycle $(B, C, E)$ is the max-weight \matching{} -- with expected weight $3.5/8$. 
With probability $1/2$, $e_3$ is accepted and the max-weight \matching{} includes cycles $(A, B)$ (with expected weight $1/2$) and $(C, D, F)$ (with expected weight $3/4$); this \matching{} has total expected weight $5/4$.
Thus, $V^S(\bm q) =27/32 < 7/8 = V^S(\bm 0)$, which concludes the proof. 

\subsection{Proof of Proposition~\ref{prop:non-submodular}}
(Proof by counterexample.) 
We provide an example where the objective value in Problem~\ref{eq:singlestage} (i.e., final expected \matching{} weight) is non-submodular--when using the max-weight \matching{} policy $M^{\texttt{MAX}}(\cdot)$.
We use the same rejection and failure distribution as in the proof of Proposition~\ref{prop:non-monotonic}.

Consider the exchange graph in Figure~\ref{fig:proof-graph}; edge $(E, B)$ has weight 1.5, while all other edges have weight $1$.
With some abuse of notation, we will denote by $V^S(\{e_a, \dots, e_N\})$ the objective of Problem~\ref{eq:singlestage} due to edge set $\{e_a, \dots, e_N\}$.
Our counterexample for submodulartiy is that, for this graph, 
$$V^S(X \cup \{e_1, e_2\}) + V^S(X) > V^S(X \cup \{ e_1\}) + V^S(X \cup \{e_2\}),$$
with set $X\equiv \{e_3\}$.
That is, the objective increase due to of querying \emph{both} edges $e_1$ and $e_3$ is greater than the combined increase due to querying both edges separately.
Next we explicitly calculate each of the above terms.

\paragraph{$V^S(X)=V^S(\{e_3\})$.}
There are two cases to consider:
\begin{itemize}
    \item $e_3$ is accepted, with probability $1/2$. The max-weight \matching{} is cycles $(A, B)$ and $(C, D, F)$, with expected weight $(1/2 + 3/4)$,
    \item $e_3$ is rejected, with probability $1/2$. The max-weight \matching{} is cycle $(B, C, E)$, with expected weight $3.5/8$.
\end{itemize}
Thus, $V^S(X)=(1/2)(1/2 + 3/4) + (1/2)(3.5/8)= 27/32$.


\paragraph{$V^S(X\cup \{e_1\})=V^S(\{e_1, e_3\})$.} There are four cases to consider:
\begin{itemize}
    \item $e_1$ and $e_3$ are accepted, with probability $1/4$. The max-weight \matching{} is cycles $(A, B)$ and $(C, D, F)$, with expected weight $(1 + 3/8)$,
    \item $e_1$ is rejected and $e_3$ is accepted, with probability $1/4$. The max-weight \matching{} is cycle $(B, C, E)$, with expected weight $3.5/8$.
    \item $e_1$ is accepted and $e_3$ is rejected, with probability $1/4$. The max-weight \matching{} is cycle $(B, C, E)$, with expected weight $3.5/8$.
    \item $e_1$ and $e_3$ are rejected, with probability $1/4$. The max-weight \matching{} is cycle $(B, C, E)$, with expected weight $3.5/8$.
\end{itemize}
Thus the objective is $V^S(X\cup \{e_3\})=(1/4)(1 + 3/8) + (3/4)(3.5/8) =43/64$.

\paragraph{$V^S(X\cup \{e_2\})=V^S(\{e_2, e_3\})$.} There are three cases to consider
\begin{itemize}
    \item $e_3$ is accepted, with probability $1/2$. The max-weight \matching{} is cycles $(A, B)$ and $(C, D, F)$, with expected weight $(1/2 + 3/4)$,
    \item $e_3$ is rejected and $e_3$ is accepted, with probability $1/4$. The max-weight \matching{} is cycle $(B, C, E)$, with expected weight $3.5/4$,
    \item $e_3$ and $e_2$ are rejected, with probability $1/4$. The max-weight \matching{} is cycle $(A,B)$, with expected weight $1/2$.
\end{itemize}
Thus the objective is $V^S(X\cup \{e_2\})=(1/2)(1/2 + 3/4) + (1/4)(3.5/4) + (1/4)(1/2)= 31/32$.

\paragraph{$V^S(X\cup \{e_1, e_2\})=V^S(\{e_1, e_2, e_3\})$.} There are four cases to consider:
\begin{itemize}
    \item $e_1$ and $e_3$ are accepted, with probability $1/4$. The max-weight \matching{} is cycles $(A, B)$ and $(C, D, F)$, with expected weight $(1 + 3/4)$,
    \item $e_1$ is accepted and $e_2$ is rejected, with probability $1/4$ (the response from $e_3$ is irrelevant). The max-weight \matching{} is $(A, B)$ and $(C, D, F)$, with expected weight $1 + 3/8$.
    \item $e_1$ is rejected and $e_2$ is accepted (the response from $e_3$ is irrelevant), with probability $1/4$. The max-weight \matching{} is cycle $(B, C, E)$, with expected weight $3.5/4$.
    \item $e_1$ and $e_2$ are rejected (the response from $e_3$ is irrelevant), with probability $1/4$. The max-weight \matching{} is cycle $(C, D, F)$, with expected weight $3/8$.
\end{itemize}
Thus the objective is $V^S(X\cup \{e_1, e_2\}) = (1/4) (1 + 3/4) + (1/4)(1 + 3/8) + (1/4) (3.5/4) + (1/4)(3/8)=35/32$.

Finally, we have:
\begin{align*}
    V^S(X\cup \{e_1, e_2\}) + V^S(X) &= 35/32 + 27/32\\
    &= 1.9375
\end{align*}
and
\begin{align*}
    V^S(X\cup \{e_1\}) + V^S(X\cup \{e_2\}) &= 43/64 + 31/32 \\
    &= 1.640625
\end{align*}
Therefore, $V^S(X\cup \{e_1, e_2\}) + V^S(X)> V^S(X\cup \{e_1\}) + V^S(X\cup \{e_2\})$, which concludes the proof.

\subsection{Proof of Proposition~\ref{prop:fail-aware-monotonic}}

For the proof of Proposition~\ref{prop:fail-aware-monotonic} we make one assumption about the distribution of edge rejections and failures: querying \emph{additional} edges cannot increase the overall probability of rejection or failure for any edge. 
%
%
\begin{assumption}\label{assumption:additional-queries}
Let $\bm q, \bm r \in \{0, 1\}^{|E|}$ denote initial edge queries and responses.
Let $\bm q'$ be \emph{additional} edges, such that $\bm q + \bm q'\in \{0, 1\}^{|E|}$ denotes an augmented edge set; let $\bm r'\in \{0, 1\}^{|E|}$ denote the responses to edges $\bm q'$ only.
We assume that for any such $\bm q$, $\bm r$, and $\bm q'$,
\begin{equation*}
    \expect\left[\bm r + \bm f \mid \bm q, \bm r \right] \geq 
    \expect\left[\bm r + \bm r' + \bm f \mid \bm q + \bm q', \bm r \right] \; .
\end{equation*}
\end{assumption}
Intuitively, Assumption~\ref{assumption:additional-queries} excludes distributions where queries arbitrarily increase edge failure or rejection. 
For example, Assumption~\ref{assumption:additional-queries} disallows the following distribution: suppose all edges are independent; all queried edges are accepted ($P(\bm r_e=1 \mid \bm q)=0$ for all $\bm q$), all accepted edges have failure probability $0.5$ ($P(\bm f_e=1 \mid \bm q_e=1, \bm r_e=0)=0.5$), and all non-queried edges have failure probability $0.1$ ($P(\bm f_e=1 \mid \bm q_e= \bm r_e=0)=0.1$).
In this case, if an edge is not queried, then it has overall rejection or failure probability $0.1$ (i.e., $\expect [ \bm r_e + \bm f_e \mid \bm q, \bm r]=0.1$ with $\bm q_e=0$); if this edge is queried, then it has rejection or failure probability $0.5$ (i.e., $\expect [ \bm r_e + \bm r_e' + \bm f_e \mid \bm q + \bm q', \bm r]=0.5$ with $\bm q_e'=1$).

First we prove a handful of useful results.

\begin{definition}[Edge Independence]
Two edges $e, e'\in E$ are \emph{independent} if (a) their rejection distributions are conditionally independent, given whether or not they were queried:
$$\bm r_e \indep \bm r_{e'} \mid \bm q_e \quad \text{and} \quad \bm r_e \indep \bm r_{e'} \mid \bm q_{e'}$$
and (b) their failure distributions are conditionally independent, given whether or not they were queried and rejected:
$$\bm f_e \indep \bm f_{e'} \mid \bm q_e, \bm r_e \quad \text{and}\quad  \bm f_e \indep \bm f_{e'} \mid \bm q_{e'}, \bm r_{e'}\, .$$
\end{definition}

\begin{lemma}\label{lem:cycle-weights}
If all edges are independent, then additional edge queries cannot decrease expected post-match cycle and chain weights.
Formally,
\begin{equation*}
    \expect\left[F(c,\bm r + \bm f) \mid \bm q, \bm r \right] \leq 
    \expect\left[F(c,\bm r + \bm r' + \bm f) \mid \bm q + \bm q', \bm r \right] 
\end{equation*}
for any $\bm q , \bm q'\in \{0, 1\}^{|E|}$ such that $\bm q + \bm q'\in \{0, 1\}^{|E|}$, for any $\bm r \in \{0, 1\}^{|E|}$, and for all $c \in \mathcal C$.
\end{lemma}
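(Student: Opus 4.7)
The plan is to reduce the inequality to an edgewise comparison: show that for each edge $e\in c$, the probability that $e$ ``effectively fails'' (i.e., contributes a $1$ to the indicator vector passed to $F$) can only decrease when we add queries, and then use edge independence to propagate this edgewise monotonicity through the cycle/chain formula.

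First I would define, for each edge $e \in c$,
\[
p_e \equiv \prob\bigl((\bm r+\bm f)_e \geq 1 \,\big|\, \bm q, \bm r\bigr), \qquad
q_e \equiv \prob\bigl((\bm r+\bm r'+\bm f)_e \geq 1 \,\big|\, \bm q+\bm q', \bm r\bigr),
\]
and argue $q_e \leq p_e$. Since each of $\bm r_e$, $\bm r'_e$, $\bm f_e$ is a $\{0,1\}$ indicator and an edge cannot simultaneously be rejected and fail post-match, these sums are themselves $\{0,1\}$, so their probabilities equal their expectations, and Assumption~\ref{assumption:additional-queries} gives $q_e \leq p_e$. Edges already observed to be rejected in $\bm r$ are handled trivially ($p_e = q_e = 1$), and edges re-queried or newly queried are handled by the same assumption.

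Next, I would use edge independence to factor $\expect[F(c,\cdot)\mid\cdot]$ explicitly. For a cycle $c$, $F(c,\bm y) = (\sum_{e\in c} w_e)\cdot \mathbb{1}[\sum_{e\in c}\bm y_e = 0]$, and independence of the effective-failure indicators across $e\in c$ gives
\[
\expect\bigl[F(c,\bm r+\bm f)\mid \bm q,\bm r\bigr] = \Bigl(\sum_{e\in c} w_e\Bigr) \prod_{e\in c} (1-p_e),
\]
with the analogous expression involving $q_e$ on the right-hand side. For a chain $c=(e_1,\dots,e_k)$ (indexed in order from the NDD), $F(c,\bm y) = \sum_{i=1}^{k} w_{e_i}\prod_{j\leq i}\mathbb{1}[\bm y_{e_j}=0]$, so independence gives
\[
\expect\bigl[F(c,\bm r+\bm f)\mid \bm q,\bm r\bigr] = \sum_{i=1}^{k} w_{e_i} \prod_{j\leq i} (1-p_{e_j}),
\]
and again similarly for the right-hand side with $q_e$. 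Finally, since $q_e \leq p_e$ implies $1-q_e \geq 1-p_e$ and all weights are nonnegative, each product (cycle case) and each summand (chain case) is weakly larger when built from the $q_e$'s, and the desired inequality follows term by term.

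The main obstacle I anticipate is the factorization step: the Edge Independence definition in the excerpt is stated only for pairs of edges and conditions on $\bm q_e$ (or $\bm q_{e'}$) rather than on the whole vector $\bm q$ and the observed $\bm r$ that appear in our conditioning event. I would need to argue (or assume, consistent with the paper's evident intent) that pairwise independence extends to joint independence of the effective-failure indicators $\{(\bm r+\bm f)_e\}_{e\in c}$ under the richer conditioning event $\{\bm q,\bm r\}$ (and similarly under $\{\bm q+\bm q',\bm r\}$). Given joint independence, the factorizations above are immediate and the rest of the argument is routine monotonicity of products and sums of nonnegative terms.
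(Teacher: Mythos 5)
Your proposal is correct and follows essentially the same route as the paper's own proof: factor the conditional expectation edge-by-edge using independence (cycle weight as a single product, chain weight as a sum of prefix products), apply Assumption~\ref{assumption:additional-queries} to each edge's effective-failure probability, and conclude by monotonicity of products and sums of nonnegative terms. The joint-independence subtlety you flag is glossed over in the paper as well (it simply asserts the factorization ``because all $\bm f_e$ are independent''), so your reading of the intended model is the right one.
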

\begin{proof}
We address cycles and chains separately.

\paragraph{Cycles.} Conditional on fixed $\bm q$ and $\bm r$, the expected weight of cycle $c=(e_1, \dots, e_L)$ is expressed as
\begin{align*}
    \expect \left[ F(c,\bm r + \bm f) \mid \bm q, \bm r\right] &= \left( \sum_{e\in c} w_e \right) \expect \left[ \prod_{e\in c} (1 - \bm r_e - \bm f_e) \mid \bm q, \bm r \right] \\
    &= \left( \sum_{e\in c} w_e \right) \prod_{e\in c}\left( 1 -  \expect \left[\bm r_e + \bm f_e \mid \bm q, \bm r \right] \right)
\end{align*}
where the second step is due to the fact that all $\bm f_e$ are independent.
Similarly, for fixed $\bm q'$,
\begin{align*}
     \expect\left[F(c,\bm r + \bm r' + \bm f) \mid \bm q + \bm q', \bm r \right] &= \left( \sum_{e\in c} w_e \right) \prod_{e\in c}\left( 1 -  \expect \left[\bm r_e + \bm r_e' + \bm f_e \mid \bm q + \bm q', \bm r \right] \right) \; .
\end{align*}
Due to Assumption~\ref{assumption:additional-queries}, the following inequality holds for all edges $e \in E$
$$
\expect \left[\bm r_e + \bm f_e \mid \bm q, \bm r \right] \geq \expect \left[\bm r_e + \bm r_e' + \bm f_e \mid \bm q + \bm q', \bm r \right] \;,
$$
and it follows that
$$
\expect\left[F(c,\bm r + \bm f) \mid \bm q, \bm r \right] \leq \expect\left[F(c,\bm r + \bm r' + \bm f) \mid \bm q + \bm q', \bm r \right].
$$

\paragraph{Chains.} Similarly, the expected weight of chain $c=(e_1, \dots, e_L)$ is expressed as
\begin{align*}
    \expect \left[ F(c,\bm r + \bm f) \mid \bm q, \bm r\right] &= \sum_{k=1}^L \left(\sum_{j=1}^k w_j \right) \expect \left[ \prod_{j=1}^k (1 - \bm r_{e_j} - \bm f_{e_j}) \mid \bm q, \bm r \right] \\
    &= \sum_{k=1}^L \left(\sum_{j=1}^k w_j \right) \prod_{j=1}^k \left(1 - \expect \left[ \bm r_{e_j} + \bm f_{e_j} \mid \bm q, \bm r \right] \right) \;,
\end{align*}
where the second step is due to the fact that $\bm f_e$ are independent.
Similarly,
\begin{align*}
    \expect \left[ F(c,\bm r + \bm r' + \bm f) \mid \bm q + \bm q', \bm r\right]
    &= \sum_{k=1}^L \left(\sum_{j=1}^k w_j \right) \prod_{j=1}^k \left(1 - \expect \left[ \bm r_{e_j} + \bm r_{e_j}' + \bm f_{e_j} \mid \bm q + \bm q', \bm r \right] \right) \;.
\end{align*}
as before, due to Assumption~\ref{assumption:additional-queries} it follows that
$$
\expect\left[F(c,\bm r + \bm f) \mid \bm q, \bm r \right] \leq \expect\left[F(c,\bm r + \bm r' + \bm f) \mid \bm q + \bm q', \bm r \right].
$$

\end{proof}

\begin{lemma}\label{lem:add-edge-improvement}
With a failure-aware \matching{} policy, and if all edges are independent, adding a single edge to any edge query set weakly improves the objective of Problem~\ref{eq:singlestage}.
Formally, for any $\bm q, \bm q'\in \{0, 1\}^{|E|}$ with $\bm q + \bm  q' \in \{0, 1\}^{|E|}$ and $|\bm q'|=1$, and $M(\bm r) \equiv M^\texttt{FA}(\bm r)$,
$$ V^S(\bm q) \leq V^S(\bm q + \bm q')$$ 
\end{lemma}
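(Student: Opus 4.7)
The plan is to exhibit a (possibly suboptimal) matching policy under the enlarged query set $\bm q + \bm q'$ whose expected weight is at least $V^S(\bm q)$, and then conclude by optimality of the failure-aware policy $M^{\texttt{FA}}$ under $\bm q + \bm q'$. Let $e^*$ denote the single extra edge, with per-edge parameters $P_R, P_Q, P_N$ (rejection probability when queried, survival probability when queried-and-accepted, and survival probability when not queried). Independence lets me couple $\bm r\sim\prob_R(\bm q)$ and $\tilde{\bm r}\sim\prob_R(\bm q + \bm q')$ so that $\tilde{\bm r}_e=\bm r_e$ for every $e\neq e^*$ (with $\bm r_{e^*}=0$ always), while $\tilde{\bm r}_{e^*}$ is an independent Bernoulli with parameter $P_R$.

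Writing $\bm x^*(\bm r) = M^{\texttt{FA}}(\bm r)$ for the optimum under the old query set, I then construct a feasible mimicking strategy $\bar{\bm x}(\tilde{\bm r})$ as follows. Whenever $\tilde{\bm r}_{e^*}=0$, or whenever $\bm x^*(\bm r)$ does not use $e^*$, take $\bar{\bm x}(\tilde{\bm r}) = \bm x^*(\bm r)$; this is immediately in $\mathcal M(\tilde{\bm r})$. In the remaining case ($\tilde{\bm r}_{e^*}=1$ and $\bm x^*(\bm r)$ uses $e^*$ in some unique vertex-disjoint cycle or chain $c^*$), keep $\bm x^*(\bm r)$ on every other cycle/chain, and replace $c^*$ by either the empty matching (if $c^*$ is a cycle) or its prefix up to but not including $e^*$ (if $c^*$ is a chain). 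By optimality of the failure-aware policy under $\bm q + \bm q'$, $V^S(\bm q + \bm q') \geq \expect_{\tilde{\bm r}}[W(\bar{\bm x}(\tilde{\bm r}); \bm q + \bm q', \tilde{\bm r})]$, so it suffices to bound the right-hand side below by $V^S(\bm q)$.

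The next step is a per-structure comparison between the contribution of each cycle/chain of $\bm x^*(\bm r)$ under $\bm q$ and the contribution of the corresponding piece of $\bar{\bm x}(\tilde{\bm r})$ under $\bm q + \bm q'$ (taken under the coupling). By independence, any cycle or chain $c$ of $\bm x^*(\bm r)$ not involving $e^*$ has identical expected $F(c, \cdot)$ under $\bm q$ and $\bm q + \bm q'$, since the query status of its edges is unchanged -- this is Lemma~\ref{lem:cycle-weights} holding with equality. For a cycle $c^*$ containing $e^*$, factoring out the (unaffected) survival probabilities of the other edges reduces the required inequality to $(1-P_R)P_Q \geq P_N$, which is precisely the instance of Assumption~\ref{assumption:additional-queries} applied to $e^*$. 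For a chain $c^*$ with $e^*$ at position $k$, I expand the expected chain weight as $\sum_{i=1}^L w_{e_i}\prod_{j\leq i} s_j$ and split into $i<k$ and $i\geq k$: the prefix-truncation preserves the $i<k$ contributions exactly (the coupled Bernoulli weighting $(1-P_R)+P_R=1$ collapses, and survival probabilities of edges preceding $e^*$ are unchanged), while the $i\geq k$ contributions reduce to the same cycle-style inequality via the factor $(1-P_R)(P_Q/P_N)\geq 1$.

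The hardest part will be the chain case: naively dropping $c^*$ whenever $e^*$ is rejected would forfeit the entire value of the chain's prefix $(e_1,\dots,e_{k-1})$, and Assumption~\ref{assumption:additional-queries} alone is too weak to offset that loss. The prefix-truncation is precisely what makes the $i<k$ portion of the chain weight match exactly under the coupling, so that the assumption only has to do work on the suffix $i\geq k$. Once the per-structure bounds are combined, summing over the vertex-disjoint cycles and chains of $\bm x^*(\bm r)$ and taking expectation under the coupling yields $\expect_{\tilde{\bm r}}[W(\bar{\bm x}; \bm q + \bm q', \tilde{\bm r})] \geq \expect_{\bm r}[W(\bm x^*; \bm q, \bm r)] = V^S(\bm q)$, which completes the argument.
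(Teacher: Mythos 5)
Your proposal is correct and follows essentially the same route as the paper: lower-bound $V^S(\bm q + \bm q')$ by the optimality of $M^{\texttt{FA}}$ under the augmented query set using a candidate matching built from $M^{\texttt{FA}}(\bm r)$, then compare cycle/chain expected weights edge-by-edge using independence and Assumption~\ref{assumption:additional-queries} (your per-edge inequality $(1-P_R)P_Q \geq P_N$ is exactly that assumption specialized to the new edge, and your non-$e^*$ structures argument is Lemma~\ref{lem:cycle-weights}). Your explicit cycle-drop/chain-truncation of the structure containing the rejected new edge has the same expected value as the paper's choice of plugging in the unmodified matching and letting $F(c,\bm r + \bm r' + \bm f)$ zero out or truncate it, so the two arguments coincide in substance, with yours slightly more careful about feasibility with respect to rejected edges.
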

\begin{proof}
The objective of Problem~\ref{eq:singlestage} for edge set $\bm q$ is expressed as
\begin{align*}
V^S(\bm q) &= \expect_{\bm r \mid \bm q} \left[ \expect_{\bm f \mid \bm q, \bm r}\left[\sum_{c\in \mathcal C} M^\texttt{FA}_c(\bm r) F(c, \bm r + \bm f) \right] \right] \\
&= \sum_{\bm r \in \{0, 1\}^{|\bm q|}} P_{\bm q}(\bm r) \expect_{\bm f \mid \bm q, \bm r}\left[\sum_{c\in \mathcal C} M^\texttt{FA}_c(\bm r) F(c, \bm r + \bm f) \right]\\
&= \sum_{\bm r \in \{0, 1\}^{|\bm q|}}  P_{\bm q}(\bm r) \,\sum_{c\in \mathcal C} M^\texttt{FA}_c(\bm r)  \expect_{\bm f \mid \bm q, \bm r}\left[ F(c, \bm r + \bm f) \right]
\end{align*}
For edge set $\bm q + \bm q'$ we partition response variables into $\bm r, \bm r'\in \{0, 1\}^{|E|}$, where $\bm r_e$ is the response variable for all edges $e\in \bm q$, and $\bm r_e=0$ for all other edges (including the edge in $\bm q'$).
Similarly, $\bm r_e'$ is the response variable for edge $\bm q'$, and $\bm r_e'=0$ for all other edges.
The objective of $\bm q + \bm q'$ is expressed as
\begin{align*}
V^S(\bm q + \bm q') &= \expect_{\bm r, \bm r' \mid \bm q + \bm q'} \left[ \expect_{\bm f \mid \bm q + \bm q', \bm r + \bm r' }\left[\sum_{c\in \mathcal C} M^\texttt{FA}_c(\bm r + \bm r') F(c,\bm r + \bm r' + \bm f) \right] \right] \\
&= \sum_{\bm r \in \{0, 1\}^{|\bm q|}} \; P_{\bm q + \bm q'}(\bm r) \expect_{\bm r' \mid \bm q + \bm q'} \left[ \expect_{\bm f \mid \bm q + \bm q', \bm r + \bm r' }\left[\sum_{c\in \mathcal C} M^\texttt{FA}_c(\bm r + \bm r')^\top F(c, \bm r + \bm r' + \bm f) \right] \right] \\
&= \sum_{\bm r \in \{0, 1\}^{|\bm q|}} \; P_{\bm q}(\bm r) \expect_{\bm r' \mid \bm q + \bm q'} \left[ \expect_{\bm f \mid \bm q + \bm q', \bm r + \bm r' }\left[ \sum_{c\in \mathcal C} M^\texttt{FA}_c(\bm r + \bm r') F(c,\bm r + \bm r' + \bm f) \right] \right],
\end{align*}
where in the final line we replace $P_{\bm q + \bm q'}(\bm r) $ with $P_{\bm q}(\bm r)$, because each $\bm r_e$ is conditionally independent, given $\bm q_e$.

Next, by definition
$$\expect_{\bm f \mid \bm q + \bm q', \bm r + \bm r'}\left[\sum_{c\in \mathcal C} M^\texttt{FA}_c(\bm r+ \bm r') F(c, \bm r + \bm r' + \bm f) \right]  \geq \expect_{\bm f \mid \bm q + \bm q', \bm r + \bm r'}\left[\sum_{c\in \mathcal C} \bm x_c F(c, \bm r + \bm r' + \bm f) \right] \quad \forall \bm x \in \mathcal M.$$
That is, $M^{FA}$ is guaranteed to maximize this expectation, and thus
\begin{align*}
V^S(\bm q + \bm q') &\geq \sum_{\bm r \in \{0, 1\}^{|\bm q|}} \; P_{\bm q}(\bm r) \expect_{\bm r' \mid \bm q + \bm q'} \left[ \expect_{\bm f \mid \bm q + \bm q', \bm r + \bm r' }\left[\sum_{c\in \mathcal C} M^\texttt{FA}_c(\bm r) F(c, \bm r + \bm r' + \bm f) \right] \right] \tag{B} \\
&= \sum_{\bm r \in \{0, 1\}^{|\bm q|}} \; P_{\bm q}(\bm r) \, \sum_{c\in \mathcal C} M^\texttt{FA}_v(\bm r)  \expect_{\bm r' \mid \bm q + \bm q'} \left[ \expect_{\bm f \mid \bm q + \bm q', \bm r + \bm r' }\left[ F(c, \bm r + \bm r' + \bm f) \right] \right] \tag{C}
\end{align*}
Finally, combining (B) and (C) with Lemma~\ref{lem:cycle-weights}, the following inequality holds
$$ V^S(\bm q) \leq V^S(\bm q + \bm q'). $$

\end{proof}

Using the above lemmas, the proof of Proposition~\ref{prop:fail-aware-monotonic} is straightforward:
\paragraph{Proposition~\ref{prop:fail-aware-monotonic}} 
\emph{
With a failure-aware \matching{} policy, if all edges are independent, the objective of Problem~\ref{eq:singlestage} is monotonic in the set of queried edges.
}
\begin{proof}
Let $\bm q' , \bm q''\in \mathcal E$ be two edge sets such that $\bm q' \subseteq \bm q''$.
It remains to show that, with \matching{} policy $M(\bm r)\equiv M^\texttt{FA}(\bm r)$, 
$$ V^S(\bm q'') \leq V^S(\bm q').$$

First note that because $\mathcal E$ is a matroid, there is a sequence of edges $(\bm q^{e_1}, \dots, \bm q^{e_L})$ (with each $|\bm q^{e_i}|=1$) such that $\bm q'' + \bm q^{e_1} + \dots + \bm q^{e_L}=\bm q'$.
Due to Lemma~\ref{lem:add-edge-improvement}, the following sequence of inequalities hold:
\begin{align*}
    V(\bm q'') \leq V(\bm q'' + \bm q^{e_1}) \\
    &  \leq V(\bm q'' + \bm q^{e_1} + \bm q^{e_2}) \\
    & \dots \\
    &  \leq V(\bm q'' + \bm q^{e_1} + \dots + \bm q^{e_L}) \\
    &= V(\bm q')
\end{align*}
which concludes the proof.
\end{proof}

\section{Algorithm Descriptions}\label{app:alg}

Here we describe more explicitly the algorithms for \GREEDYS{} and \MCTSS{}, for both the single-stage and multi-stage settings.

\subsection{UCB Value Estimates for \MCTSS{}}

Both the single- and multi-stage versions of \MCTSS{} use the method of~\cite{kocsis2006bandit} to select the next child node to explore.
The formula used to estimate a node's UCB value is
$$ \frac{\frac{U}{N} - V^{min}}{V^{max}- V^{min}} + \sqrt{N^P/N} $$
where $U$ is the ``UCB value estimate'' calculated by \MCTSS{}, $N$ is the number of visits to the node, $N^P$ is the number of visits to the node's parent, and $V^{max}$ and $V^{min}$ are the largest and smallest \emph{node values} encountered during search.
In single-stage \MCTSS{}, all nodes have both a \emph{node value} (the objective value of Problem~\ref{eq:singlestage}) and a UCB value estimate; as described below, in multi-stage \MCTSS{} only query nodes have a UCB value estimate, and only leaf nodes have a \emph{node value} (expected matched weight, after observing responses from all queried edges).

\subsection{Greedy Single-Stage Edge Selection}\label{sec:alg}
Algorithm~\ref{alg:greedy-ss} gives a pseudocode description of \GREEDYS{} for the single-stage setting.

\begin{algorithm}
\SetAlgoNoLine
\DontPrintSemicolon
(input) $\mathcal E$: legal edge sets\;
\;
$\bm q^R \gets \bm 0$ \quad the root node (no edges)\;
$V^* \gets $ objective value of $\bm q^R$ Problem~\ref{eq:singlestage} \;
\While{$\bm q^R$ has children}{
    $\bm q'\gets$ child node of $\bm q^R$ with maximal objective value in Problem~\ref{eq:singlestage} \;
    $\bm q^R \gets \bm q'$\;
  }
\KwRet $\bm q^R$
\caption{\GREEDYS{}: Greedy Search Heuristic for Single-Stage Edge Selection}
\label{alg:greedy-ss}
\end{algorithm}

\subsection{Multi-Stage Edge Selection}\label{app:multistage-alg}

In the following sections we describe multi-stage versions of \MCTSS{} and \GREEDYS{}.
Unlike in the single-stage setting, these algorithms take as input a set of previously-queried edges $\bm q\in \{0, 1\}^{|E|}$ and a corresponding set of observed rejections $\bm r \in \{0, 1\}^{|E|}$; they output the \emph{next} edge to query.

\paragraph{Multi-Stage \MCTSS{}.}

The multi-stage search tree is somewhat more complicated than in the single-stage setting, as each node in the search tree corresponds to both a set of queried edges and a set of observed rejections. 
For this purpose we use two types of nodes: \emph{outcome} nodes, and \emph{query} nodes.
Outcome nodes consist of previously-queried edges $\bm q$ and previously-observed rejections $\bm r$, and are represented by tuple $(\bm q, \bm r)$.
(The root of the search tree corresponds to \emph{no} queries or observed rejections, $(\bm 0, \bm 0)$.)
The children of an outcome node are \emph{query} nodes, represented by the next edge to query from the parent (outcome), represented by tuple $(\bm q, \bm r, e)$.
Each outcome node has one child for every edge that has not yet been queried:
$$C^O(\bm q, \bm r)\equiv \left\{ (\bm q, \bm r, e) \; \mid \; \forall e\in E: \; \bm q + \bm u^e \in \mathcal E \right\} $$
where $\bm u^e$ is the unit vector for element $e$ ($\bm u^e_i=0$ for all $i\neq e$, and $\bm u^e_e=1$).
Each query node has exactly two children: one where the queried edge is accepted, and one where the queried edge is rejected,
$$C^Q(\bm q, \bm r, e) \equiv \left\{ (\bm q + \bm u^e, \bm r),  (\bm q + \bm u^e, \bm r + \bm u^e) \right\} $$
As before, the \emph{level} of a node refers to the number of queried edges: $|\bm q|$ for outcome nodes, and $|\bm q|+1$ for query nodes.

As before we refer to nodes with no children as leaf nodes; note that only outcome nodes are leaf nodes.
Unlike the single-stage version of \MCTSS{}, in the multi-stage setting we only consider the value of leaf nodes\footnote{This decision was made in part because initial results indicate that edge selection is essentially monotonic.}.
The value of a leaf (outcome) is
$$ V^O(\bm q, \bm r) \equiv W(M(\bm r); \bm q, \bm r),$$
where as before $M(\bm r)$ denotes the matching policy, and $W(\bm x; \bm q, \bm r)$ denotes the expected matching weight of $\bm x$, subject to $\bm q$ and $\bm r$.
The value of leaf outcome nodes is used to by \texttt{QSample} and \texttt{OSample} to guide multi-stage \MCTSS{}. 

%

Algorithm~\ref{alg:uct-multi-stage} describes the multi-stage version of \MCTSS{}, taking previously-queried edges and observed responses as input.
This algorithm initializes the value estimate $U[\cdot]$ and number of visits $N[\cdot]$ for query nodes in the next $L$ levels--these quantities are used in the UCB calculation.

\begin{algorithm}[H]
\SetAlgoNoLine
\DontPrintSemicolon
(input) $\mathcal E$: legal edge sets\;
(input) $K$: maximum size of any legal edge set\;
(input) $T$: time limit\;
(input) $L$: number of look-ahead levels\;
(input) $\bm q^R$: previously-queried edges \;
(input) $\bm r^R$: previously-observed rejections \;
\;

$M \gets \min\{N + L, K \}$ \; 
$Q \gets$ all query nodes which are descendants of $(\bm q^R, \bm r^R)$, up to level $M$ \;
$U[(\bm q, \bm r, e)] \gets 0 \; \forall (\bm q, \bm r, e)\in Q$  \quad UCB value estimate \;
$N[(\bm q, \bm r, e)] \gets 0 \;\forall \bm q\in Q$ \quad number of visits \;
\While{less than time $T$ has passed}{
\texttt{QSample}($\bm q^R$, $\bm r^R$ $M$)\;
}
$(\bm q^R, \bm r^R, e^*) \gets$ child node of $(\bm q^R, \bm r^R)$ with the greatest UCB estimate \;
\KwRet $e^*$
\caption{Multi-Stage \MCTSS{}}
\label{alg:uct-multi-stage}
\end{algorithm}

\begin{algorithm}[H]
\SetAlgoNoLine
\DontPrintSemicolon
(input) $(\bm q, \bm r)$: outcome node \;
(input) $M$: maximum level to sample from
\;
\;
\If{$(\bm q, \bm r)$ has no children}{
\KwRet $V^O(\bm q, \bm r)$ \quad (return the value of this outcome node)}
\If{$(\bm q, \bm r)$ has children}{
\eIf{$|\bm q|<M - 1$}{
    $(\bm q, \bm r, e') \gets$  child node of $(\bm q, \bm r)$ with the greatest UCB estimate\;
   \texttt{OSample}($\bm q$, $\bm r$, $e$)
}{
$(\bm q', \bm r') \gets$ random leaf node, descendant from $(\bm q, \bm r)$ \;
\KwRet{$V^O(\bm q', \bm r')$} \;
}
}
\caption{\texttt{QSample}: Function for sampling query nodes in multi-stage \MCTSS{}}\label{alg:sample-multistage}
\end{algorithm}

\begin{algorithm}[H]
\SetAlgoNoLine
\DontPrintSemicolon
(input) $(\bm q, \bm r, e)$: query node \;
\;
$N[(\bm q, \bm r, e)] \gets N[(\bm q, \bm r, e)] + 1$ \;
$\bm q' \gets\bm q + \bm u^e$ (new query vector with edge $e$ added) \;
$Z \gets$ randomly sample a response to edge $e$ (0 if accept, 1 if reject)\;
$\bm r' \gets \bm r + Z \bm u^e$ \quad (updated rejection vector)\;
$U[(\bm q, \bm r, e)] \gets U[(\bm q, \bm r, e)] + \texttt{QSample}(\bm q', \bm r')$ \;
\caption{\texttt{OSample}: Function for sampling outcome nodes in multi-stage \MCTSS{}}\label{alg:o-sample-multistage}
\end{algorithm}

Algorithm~\ref{alg:sample-multistage} (\texttt{QSample})  samples query nodes from an outcome node, while Algorithm~\ref{alg:o-sample-multistage} (\texttt{OSample}) samples outcome nodes from a query node (and updates the query node's UCB value estimate).

\paragraph{Multi-Stage \GREEDYS{}.}
Algorithm~\ref{alg:multi-stage-greedy} gives a pseudocode description of the multi-stage version of \GREEDYS{}.
This search heuristic returns the next edge to query with the highest expected final \matching{} weight, \emph{ignoring all future queries}.  
In other words, this approach treats every edge as the \emph{last} edge; one might call this heuristic ``myopic'' as well as greedy.
\begin{algorithm}
\SetAlgoNoLine
\DontPrintSemicolon
(input) $\mathcal E$: legal edge sets\;
(input) $\bm q$: previously-queried edges\;
(input) $\bm r$: previously-observed rejections\;
\;
$e^* \gets \emptyset$
$V^* \gets 0$ \;
\For{all $\bm q'$ in $\bm q$'s children}{
    $e' \gets$ the new edge queried in child node $\bm q'$\;
    $\bm r^A \gets \bm r$\;
    $\bm r^R \gets \bm r$\;
    $\bm r^A_{e'} \gets 0$ \quad(response scenario where $e'$ is accepted, and $\bm r_{e'}=0$) \;
    $\bm r^R_{e'} \gets 1$ \quad (response scenario where $e'$ is rejected, and $\bm r_{e'}=1$) \;
    $p^A \gets$ probability that $e$ is accepted, conditional on previous responses \;
    $p^R \gets$ probability that $e$ is rejected, conditional on previous responses \;
    $V' \gets p^A \cdot W(M(\bm r^A); \bm q', \bm r^A) + p^R W(M(\bm r^R); \bm q', \bm q^R)$ \quad (value of querying edge $e'$)\;
    \If{$V' > V^*$}{
    $\bm e^* \gets e'$ \;
    $V^* \gets V'$ \;
    }
  }
\KwRet $\bm e^*$
\caption{\GREEDYS{} Heuristic for Multi-Stage Edge Selection}
\label{alg:multi-stage-greedy}
\end{algorithm}

\end{document}